\documentclass{article}
\usepackage{arxiv}
\renewcommand{\shorttitle}{GRAPE for Query-Efficient High-Dimensional Bayesian Optimization}
\renewcommand{\headeright}{}
\renewcommand{\undertitle}{}
\usepackage[utf8]{inputenc}
\usepackage[T1]{fontenc}
\usepackage[hyphens]{url}
\usepackage{graphicx}
\usepackage{natbib}
\usepackage{caption}
\usepackage{algorithm}
\usepackage{algorithmic}
\usepackage{booktabs}
\usepackage{multirow}

\usepackage{amsmath,amssymb,amsfonts,amsthm}
\usepackage{xspace}
\usepackage{microtype}
\usepackage{enumitem}
\usepackage{tikz}
\usetikzlibrary{arrows.meta,calc,positioning,decorations.pathmorphing}
\allowdisplaybreaks

\definecolor{gradblue}{HTML}{4C72B0}
\definecolor{progred}{HTML}{C44E52}
\definecolor{grapepurple}{HTML}{9467BD}

\newtheorem{theorem}{Theorem}

\newtheorem{proposition}{Proposition}

\newtheorem{definition}{Definition}
\newtheorem{assumption}{Assumption}

\newtheorem*{theorem*}{Theorem}
\newtheorem*{proposition*}{Proposition}

\newcommand{\bx}{\mathbf{x}}
\newcommand{\bv}{\mathbf{v}}
\newcommand{\bz}{\mathbf{z}}
\newcommand{\by}{\mathbf{y}}
\newcommand{\bX}{\mathbf{X}}
\newcommand{\bI}{\mathbf{I}}
\newcommand{\bmu}{\boldsymbol{\mu}}
\newcommand{\bSigma}{\boldsymbol{\Sigma}}
\newcommand{\calD}{\mathcal{D}}
\newcommand{\calX}{\mathcal{X}}
\newcommand{\calH}{\mathcal{H}}
\newcommand{\calN}{\mathcal{N}}

\DeclareMathOperator*{\argmin}{\mathrm{arg\,min}}
\DeclareMathOperator*{\argmax}{\mathrm{arg\,max}}
\DeclareMathOperator{\Tr}{\mathrm{Tr}}

\newcommand{\method}{\textsc{GRAPE}\xspace}
\newcommand{\mpd}{\textsc{MPD}\xspace}
\newcommand{\gibo}{\textsc{GIBO}\xspace}
\newcommand{\turbo}{\textsc{TuRBO}\xspace}
\newcommand{\vbo}{\textsc{VBO}\xspace}
\newcommand{\dlogei}{\textsc{D-LogEI}\xspace}
\newcommand{\minucb}{\textsc{MinUCB}\xspace}
\newcommand{\nestbo}{\textsc{NeST-BO}\xspace}

\title{\method: Gradient Refinement and Progress-Aware Exploitation for Query-Efficient High-Dimensional Bayesian Optimization}

\author{
  Richard Cornelius Suwandi \\
  School of Artificial Intelligence \\
  The Chinese University of Hong Kong, Shenzhen \\
  Shenzhen, China \\
  \texttt{richardsuwandi@link.cuhk.edu.cn} \\
  \And
  Feng Yin\thanks{Corresponding author.} \\
  School of Artificial Intelligence \\
  The Chinese University of Hong Kong, Shenzhen \\
  Shenzhen, China \\
  \texttt{yinfeng@cuhk.edu.cn} \\
}

\date{}

\begin{document}

\maketitle

{\let\thefootnote\relax\footnotetext{Preprint. Under review.}

\begin{abstract}
Optimizing expensive, high-dimensional black-box functions remains a central challenge in modern machine learning and scientific discovery. While local Bayesian optimization mitigates the curse of dimensionality, existing techniques often prioritize the probability of descent over the magnitude of progress. This leads to overly conservative steps that yield negligible improvement, wasting queries on directions that are nearly certain to descend but offer little decrease. We introduce Gradient Refinement and Progress-Aware Exploitation (GRAPE), a two-stage framework that first sharpens the local gradient posterior via a closed-form acquisition function, then selects update directions by maximizing the expected decrease conditional on descent. Theoretical analysis proves that this gradient refinement stage monotonically minimizes local uncertainty and that the progress-aware direction converges to true steepest descent as the posterior sharpens. Empirically, GRAPE demonstrates superior query efficiency across high-dimensional tasks: in black-box adversarial attacks, it achieves an average $5.4\times$ speedup over baselines, and on large language model prompt optimization tasks, it outperforms the second best method by a reduction of $3.8$ log-units in the final average regret.
\end{abstract}

\keywords{Bayesian optimization \and Gaussian processes \and high-dimensional black-box optimization \and gradient-aware optimization \and sample efficiency}

\section{Introduction}
\label{sec:intro}

Bayesian optimization (BO) is a sample-efficient framework for optimizing
expensive black-box objectives whose gradients are unavailable and whose
evaluations may be noisy~\citep{shahriari2016taking, frazier2018tutorial,
garnett2023bayesoptbook}. It maintains a probabilistic surrogate of the
objective, typically using a Gaussian process (GP), and selects successive queries by
maximizing an acquisition function that balances exploration and
exploitation~\citep{jones1998efficient, mockus1978application}. This design
makes BO particularly attractive whenever each evaluation is costly, as fewer
queries translate directly into reduced experimental or computational expense.
Consequently, BO has been applied across a broad range of domains, including
hyperparameter tuning~\citep{snoek2012practical, arango2021hpo}, neural
architecture search~\citep{white2021bananas}, safe controller
configuration~\citep{berkenkamp2016safe}, chemical reaction and materials
design~\citep{shields2021bayesian, wang2022bayesian}, black-box adversarial
attacks~\citep{ru2020bayesopt, shu2023zeroth}, and more recently, prompt optimization for
large language models~\citep{zhou2022large, chew2026bolt}. In each of these settings, the underlying objective is expensive to evaluate, whether because it requires training a neural network, running a physical experiment, or querying a large language model, making query efficiency the primary performance metric.

Despite this sample efficiency, standard BO often degrades as the input dimension
grows. A limited evaluation budget leaves
most of the domain unexplored, a surrogate becomes uncertain over wide
regions, and optimizing a global acquisition function becomes increasingly
difficult \citep{doumont2026highdimBO}. The fundamental issue is that the volume of the search space grows exponentially with dimension, so a fixed evaluation budget covers a vanishingly small fraction of the domain. To solve these challenges, recent works have shown that an appropriate choice of GP kernel function and its hyperparameters can make standard BO
surprisingly competitive in some high-dimensional problems
\citep{hvarfner2024vanilla,xu2025standard, suwandi2025adaptive}, although this behavior is sensitive to
problem structure and GP hyperparameter initialization \citep{papenmeier2025understanding}.
Existing high-dimensional BO methods commonly address this problem by
assuming low-dimensional structures or restricting
search to local regions. Yet, structural assumptions often fail when the objective
is poorly understood, whereas local strategies trade global guarantees for
sample-efficient improvement near a promising evaluation point.

Gradient-aware local BO offers a particularly direct local strategy by leveraging a key property of Gaussian processes (GPs): a belief over the objective naturally induces a joint Gaussian belief over its gradient \citep{williams2006gaussian}. As a result, even in the absence of explicit gradient observations, function-value evaluations can be used to refine the posterior over gradients. This is especially valuable in high dimensions, where the cost of exploring the full input space is prohibitive but local gradient information can guide efficient descent. For example, \gibo \citep{muller_local_2021} first reduces gradient uncertainty before proceeding along the posterior mean gradient, while \mpd \citep{nguyen2022local} selects the direction with the highest posterior probability of descent. In contrast, \minucb \citep{fan2024minimizing} exploits a local upper confidence bound. These approaches differ in how they leverage posterior information to guide local steps, but all share the insight that refining the gradient belief before moving can yield more productive queries than global exploration.

In this work, we specifically revisit the notion of descent probability. While descent probability quantifies the certainty that a given direction descends, it does not capture the magnitude of improvement when descent occurs. A direction may be almost certain to descend while yielding only negligible progress, whereas another, less certain direction could result in substantially greater improvement when successful. Simply maximizing descent probability therefore favors safe, but potentially unproductive directions and can overlook opportunities for more significant progress. We address this limitation with Gradient Refinement and Progress-Aware
Exploitation (\method). Our main contributions are:
\begin{itemize}
\item We propose a two-stage local BO framework that first sharpens the local
gradient posterior via a closed-form acquisition, then selects update
directions by maximizing the expected decrease conditional on descent.
\item We prove that gradient refinement monotonically minimizes local
uncertainty and that the progress-aware direction converges to true
steepest descent as the posterior sharpens.
\item Empirically, \method achieves an average $5.4\times$ speedup on
black-box adversarial attacks and outperforms the second-best method by
$3.8$ log-units on LLM prompt optimization.
\end{itemize}

\section{Related Work}
\label{sec:related}

\subsection{High-Dimensional Bayesian Optimization}

High-dimensional BO methods commonly reduce the effective search complexity
through structural assumptions or local modeling. Random-embedding methods
optimize in a low-dimensional subspace under the assumption that the objective
has low intrinsic dimension \citep{wang2016rembo}. Sparse axis-aligned models
instead learn a small set of influential coordinates
\citep{eriksson2021saasbo}. These approaches can be effective when their
structural assumptions match the objective, but their performance can
deteriorate when the active structure is unknown or misspecified.

Local modeling offers an alternative that does not require an explicit
low-dimensional structure. \turbo \citep{eriksson2019scalable} fits GPs
within adaptive trust regions that expand after successful steps and contract
after failures, concentrating evaluations near promising points while using
restarts to escape from poor basins. This local emphasis is shared by the
gradient-aware methods discussed next, which replace trust-region
success or failure rules with guidance from the GP derivative posterior.

Recent work has also revisited standard, or ``vanilla'' BO in high
dimensions. \citet{hvarfner2024vanilla} scale a log-normal prior on GP
length scales with dimension and obtain a strong global \dlogei baseline.
\citet{xu2025standard} show that poor length-scale initialization can produce
vanishing training gradients for squared-exponential kernels, while
Mat\'ern kernels or dimension-aware initialization can avoid this failure in some cases. Most recently,
\citet{suwandi2025adaptive} move beyond fixed kernel choices with CAKE, which
uses LLMs to evolve GP kernel structures from observed data and ranks the
resulting candidates by both model fit and acquisition utility.

\subsection{Gradient-Aware Local Bayesian Optimization}

More closely related to our work are gradient-aware local methods, which alternate between learning about the gradient and
using that information to update the current point. \gibo
\citep{muller_local_2021} minimizes posterior gradient uncertainty before
moving along the negative posterior mean gradient. \mpd
\citep{nguyen2022local} observes that the posterior mean gradient need not
maximize descent probability and derives the most probable descent direction.
\minucb \citep{fan2024minimizing} replaces the gradient step with local
minimization of an upper confidence bound. \nestbo
\citep{tang2025nestbo} jointly predicts the gradient and Hessian from the GP
and selects evaluations that reduce uncertainty in a modified Newton step.
This curvature information can improve local scaling and convergence in
ill-conditioned regions, but learning a $d\times d$ Hessian introduces
quadratic derivative complexity. The convergence behavior of gradient-aware
local methods has also been studied under smooth-kernel assumptions
\citep{wu2023behavior}.

\method modifies the exploitation criterion rather than increasing the
derivative order or optimizing a confidence bound. Its progress-aware
criterion ranks directions by the expected decrease conditional on descent,
while gradient refinement greedily reduces the uncertainty of the gradient
used by that criterion. The approach requires only function evaluations and
first-order GP derivatives, making it applicable in settings where second-order information is too costly to recover or where the objective is not sufficiently smooth for Hessian-based methods to be reliable.

\subsection{Global--Local Hybrids and Probabilistic Numerics}

Local optimization can also be embedded within a global optimization strategy.
For example, BLOSSOM switches between BO acquisitions and conventional local optimization
\citep{mcleod2018optimization}, while TREGO alternates global BO steps with
trust-region steps \citep{diouane2023trego}. A gradient-aware optimizer such as
\method can serve as the local component of a similar hybrid when global
coverage is required. More broadly, our approach adopts the probabilistic numerics view that
numerical computation can be formulated as inference
\citep{hennig2022probabilistic}. The unavailable gradient is treated as a
latent quantity inferred from function evaluations, and posterior uncertainty
directly determines which information to collect and how to move. This perspective unifies the exploration and exploitation stages of \method{} under a single Bayesian framework, where both the choice of refinement queries and the selection of descent directions are governed by the same posterior distribution. 

\section{Preliminaries}
\label{sec:background}

\subsection{Problem Setting}

Let $f:\calX\to\mathbb{R}$ be a continuous black-box function on a compact
domain $\calX\subset\mathbb{R}^d$. Given a starting point $\bx_0$, our goal
is to minimize $f$ in a local neighborhood
$\calX(\bx_0)\subseteq\calX$ of $\bx_0$:
\begin{equation}
\bx^* 
=\argmin_{\bx\in\calX(\bx_0)} f(\bx).
\end{equation}
We can query the objective at locations of our choice and observe
\begin{equation}
y=f(\bx)+\varepsilon,\qquad \varepsilon\sim\calN(0,\sigma^2),
\end{equation}
where $\sigma^2\geq 0$ is the observation-noise variance.
Here, the gradient of $f$ is unavailable, and each function evaluation is assumed
to be costly. Consequently, an effective black-box optimizer must decide both where to
query for information and how to use that information to move from the
current point. Local optimization makes this problem more tractable in high
dimensions by replacing broad coverage of $\calX$ with targeted learning
near the current iterate.

\subsection{Bayesian Optimization with Gaussian Processes}

Bayesian optimization (BO) tackles expensive black-box optimization by maintaining a probabilistic model of the unknown objective and using that model to select informative evaluation points \citep{garnett2023bayesoptbook}. Specifically, we place a Gaussian process (GP) prior $f\sim\mathcal{GP}(\mu,k)$ on $f$, where $\mu$ is the mean function and $k$ is the kernel (covariance) function. For any finite set of inputs, the corresponding function values are jointly Gaussian \cite{williams2006gaussian}, allowing us to compute an analytic posterior after observing the data $\calD=(\bX,\by)$. Define $\mathcal{K} = k(\bX, \bX) + \sigma^2\bI$. Here, $k(\bx,\bX)$ is the row vector of covariances between $\bx$ and the training inputs $\bX$, and $k(\bX,\bx')$ is the corresponding column vector. The posterior mean and covariance are given by
\begin{align}
\mu_\calD(\bx)
&=\mu(\bx)+k(\bx,\bX)\mathcal{K}^{-1}(\by-\mu(\bX)),\\
k_\calD(\bx,\bx')
&=k(\bx,\bx')-k(\bx,\bX)\mathcal{K}^{-1}k(\bX,\bx').
\end{align}
The posterior mean $\mu_\calD$ provides a prediction of the objective at $\bx$, while the posterior variance $k_\calD(\bx,\bx)$ quantifies the uncertainty. BO uses these predictive quantities to construct an acquisition function, which balances exploration and exploitation by assigning high utility to points that are either promising (low predicted objective) or uncertain (high variance). As an example, expected improvement \citep{jones1998efficient,mockus1978application} favors points that are likely to yield an improvement over the current best. After evaluating the acquisition-maximizing point, BO updates the GP model with the new observation and repeats the process.

\subsection{Gradient-Aware Local Optimization}

GPs are particularly suitable for gradient-aware local optimization because
linear operations preserve Gaussianity. If $\mu$ is differentiable and $k$
is twice differentiable, function values and partial derivatives are jointly
Gaussian \citep{williams2006gaussian}. Thus, although gradients are never
queried directly, differentiating the posterior GP with respect to $\bx$ yields
\begin{equation}
\label{eq:posterior}
\nabla f(\bx)\mid\calD\sim\calN(\bmu_\bx,\bSigma_\bx),
\end{equation}
where
\begin{align}
\bmu_\bx
&=\nabla\mu(\bx)+\nabla k(\bx,\bX)
  \mathcal{K}^{-1}(\by-\mu(\bX)),\\
\bSigma_\bx
&=\nabla k(\bx,\bx)\nabla^\top
 -\nabla k(\bx,\bX)\mathcal{K}^{-1}
  k(\bX,\bx)\nabla^\top.
\end{align}
The differential operator before $k$ acts on its first argument and the operator
after $k$ acts on its second argument. The
vector $\bmu_\bx$ is the posterior mean of the local gradient, and
$\bSigma_\bx$ quantifies the uncertainty in that gradient.


The posterior in Eq.~\eqref{eq:posterior} induces a general two-stage local BO
strategy at the current point $\bx$: Stage~1 selects function evaluations that
refine the local derivative belief, and Stage~2 uses that belief to choose a
descent step. Several methods follow this general framework. \gibo minimizes total gradient uncertainty and takes a step along the posterior mean of the gradient \citep{muller_local_2021}. \minucb retains a similar exploration stage, but instead of stepping along the mean gradient, it exploits by minimizing a local upper confidence bound \citep{fan2024minimizing}. \nestbo extends this idea by jointly learning gradient and Hessian posteriors to target a modified Newton step \citep{tang2025nestbo}. Most closely related to our approach, \mpd leverages the full first-order posterior to identify the direction most likely to decrease the objective \citep{nguyen2022local}. Because our exploitation criterion is directly based on this construction, we review it next.

To formalize \mpd's criterion, let $\bv\in\mathbb{R}^d$ with $\|\bv\|=1$ denote a
candidate search direction on the unit sphere at $\bx$. The corresponding
\emph{directional derivative}
\begin{equation}
\nabla_\bv f(\bx)
\;=\;
\bv^\top\nabla f(\bx)
\end{equation}
is the rate of change of $f$ at $\bx$ when moving along $\bv$. Because
$\nabla f(\bx)\mid\calD\sim\calN(\bmu_\bx,\bSigma_\bx)$, the projected
scalar is univariate Gaussian,
\begin{equation}
\begin{aligned}
\nabla_\bv f(\bx)\mid\calD
&\sim\calN(\mu_\bv,\sigma_\bv^2),\\
\mu_\bv&=\bv^\top\bmu_\bx,\qquad
\sigma_\bv^2=\bv^\top\bSigma_\bx\bv.
\end{aligned}
\end{equation}
A direction is locally descending when this directional derivative is
negative. Its posterior probability of descent is therefore
\begin{equation}
\Pr(\nabla_\bv f(\bx)<0\mid\calD)
=\Phi\!\left(-\frac{\mu_\bv}{\sigma_\bv}\right),
\label{eq:descent_probability}
\end{equation}
where $\Phi$ denotes the standard normal CDF.
Unlike a step based only on the posterior mean, this probability accounts
for both the estimated slope and its uncertainty.
\mpd maximizes this probability over directions.
When
$\bSigma_\bx$ is positive definite, its unique maximizing direction up to scale is
\begin{equation}
\bv^*_{\mpd}
=-\frac{\bSigma_\bx^{-1}\bmu_\bx}
{\|\bSigma_\bx^{-1}\bmu_\bx\|},
\end{equation}
and the corresponding maximum descent probability is
\begin{equation}
\Phi\!\left(\sqrt{\bmu_\bx^\top
\bSigma_\bx^{-1}\bmu_\bx}\right).
\end{equation}

We retain \mpd's directional posterior but argue that maximizing descent
probability alone is not enough for sample-efficient local progress. The next
section makes this limitation precise and develops the progress-aware
criterion of \method.

\section{Proposed Method}
\label{sec:method}

\subsection{Beyond Descent Probability}

The descent probability in Eq. \eqref{eq:descent_probability} depends on the
directional posterior only through the standardized mean
$\gamma_\bv=\mu_\bv/\sigma_\bv$:
\begin{equation}
\mathbb{P}\bigl(\nabla_\bv f(\bx)<0\bigr)
=
\Phi(-\gamma_\bv).
\label{eq:descent_prob_gamma}
\end{equation}
Consequently, $\Phi(-\gamma_\bv)$ is invariant to any positive rescaling
$(\mu_\bv,\sigma_\bv)\mapsto(c\mu_\bv,c\sigma_\bv)$. Two unit directions with
the same $\gamma_\bv$ therefore receive identical scores, even when one
expected slope is far steeper than the other. Descent probability thus ranks
how certain a direction is to descend, not how much decrease it offers when
descent occurs.

To see why this matters in practice, consider two candidate directions at a
point $\bx$ with directional posteriors
$(\mu_1,\sigma_1)=(-2,1)$ and $(\mu_2,\sigma_2)=(-0.2,0.1)$. Both have
$\gamma=-2$ and therefore share the same descent probability
$\Phi(2)\approx 0.977$. Yet their expected decreases conditional on descent
are $\approx 2.05$ and $\approx 0.21$, respectively, an order-of-magnitude
difference that descent probability is blind to. In high dimensions, where
the posterior is often diffuse early in optimization, such ties are common
and the resulting direction choices can waste queries on nearly flat but
certainly-descending directions. The next subsection restores this missing
magnitude.

\subsection{Progress-Aware Exploitation}
\label{sec:progress}

We rank directions by the expected decrease conditional on descent, and use
this score as the exploitation stage of \method.

\begin{definition}[Progress-aware exploitation score]
\label{def:progress}
Let $\gamma_\bv=\mu_\bv/\sigma_\bv$ denote the standardized directional
mean. The progress-aware exploitation score of direction $\bv$ at $\bx$ is
\begin{align}
\mathcal{P}(\bv) &:= \mathbb{E}\bigl[-\nabla_\bv f(\bx)\mid \nabla_\bv f(\bx)<0\bigr] \notag \\
&= \sigma_\bv\!\left[\frac{\phi(\gamma_\bv)}{\Phi(-\gamma_\bv)} - \gamma_\bv\right],
\label{eq:progress}
\end{align}
where $\phi$ and $\Phi$ are the standard normal PDF and CDF.
\end{definition}

The closed form follows directly from the mean of a truncated Gaussian.
For $Z\sim\calN(\mu,\sigma^2)$, the mean of the lower-truncated distribution
at $0$ is $\mathbb{E}[Z\mid Z<0]=\mu-\sigma\phi(\mu/\sigma)/\Phi(-\mu/\sigma)$.
Applying this identity to $Z=\nabla_\bv f(\bx)$ and negating the resulting
conditional mean yields
\begin{equation}
\mathbb{E}[-\nabla_\bv f(\bx)\mid\nabla_\bv f(\bx)<0]
= -\mu_\bv + \sigma_\bv\frac{\phi(\gamma_\bv)}{\Phi(-\gamma_\bv)},
\end{equation}
which is equal to Eq. \eqref{eq:progress}. We then select the unit direction that
maximizes this expected conditional decrease:
\begin{equation}
\bv^* = \argmax_{\|\bv\|=1}\mathcal{P}(\bv).
\label{eq:max_progress}
\end{equation}
We solve Eq. \eqref{eq:max_progress} via projected gradient ascent
\citep{calamai1987projected}. The analytic gradient of $\mathcal{P}(\bv)$ and
its relation to \mpd{}, PI, and EI are derived in Appendices~\ref{app:grad_progress}
and~\ref{app:connections}.

The progress-aware score has several desirable properties. First, it naturally balances descent probability and expected magnitude: when $\gamma_\bv\to-\infty$ (highly certain descent), $\mathcal{P}(\bv)\to -\mu_\bv$, recovering the posterior mean slope, and when $\gamma_\bv\to+\infty$ (certain ascent), $\mathcal{P}(\bv)\to 0$, correctly assigning no value to ascending directions. In the intermediate regime where $\gamma_\bv\approx 0$, the score is dominated by the uncertainty term $\sigma_\bv\phi(0)/\Phi(0)$, favoring directions with high posterior variance, a form of implicit exploration that is valuable when the gradient belief is still diffuse. Second, unlike unconditional expected descent $\mathbb{E}[-\nabla_\bv f(\bx)]=-\mu_\bv$, which can be positive even for directions that are likely to ascend (when $\mu_\bv<0$ but the variance is large), the conditional expectation ensures that we only count progress in directions that actually descend. This makes $\mathcal{P}(\bv)$ a more reliable criterion for selecting productive update directions.

\subsection{Gradient Refinement}
\label{sec:twostage}

Progress-aware exploitation ranks directions under the gradient posterior. When that posterior is
diffuse, the ranking can be unreliable: a large $\mathcal{P}(\bv)$ may reflect
posterior noise rather than a genuinely steep descent direction. This is particularly problematic early in optimization, when few evaluations have been collected and the GP surrogate is uncertain over large regions of the input space. Before moving,
\method therefore allocates a short budget of function evaluations to sharpen
$\bSigma_\bx$ at the current iterate $\bx$.

We spend this budget on $\tau_{\mathrm{explore}}$ auxiliary queries chosen to
improve the local gradient estimate at $\bx$. For a candidate location
$\bz\in\calX$, a query would return $y_\bz=f(\bz)+\varepsilon$ with
$\varepsilon\sim\calN(0,\sigma^2)$. Let
$\bSigma_{\bx|\calD\cup(\bz,y_\bz)}$ denote the gradient covariance at $\bx$
after augmenting $\calD$ with $(\bz,y_\bz)$. We select $\bz$ to maximize the
\emph{gradient-refinement acquisition}, which is the expected one-step reduction in
total gradient variance,
\begin{equation}
\begin{aligned}
\alpha_{\mathrm{ref}}(\bz)
&=
\mathbb{E}_{y_\bz}\!\Bigl[
  \Tr(\bSigma_\bx)
  -
  \Tr\!\bigl(\bSigma_{\bx|\calD\cup(\bz,y_\bz)}\bigr)
\Bigr].
\end{aligned}
\label{eq:refinement_def}
\end{equation}
Here $\Tr(\bSigma_\bx)$ is the current total gradient uncertainty, and
$\Tr\!\bigl(\bSigma_{\bx|\calD\cup(\bz,y_\bz)}\bigr)$ is the residual
uncertainty after observing $y_\bz$ at $\bz$. The expectation is over the
yet-unobserved outcome $y_\bz$ because the acquisition must be evaluated before
querying $\bz$. At first glance, Eq.~\eqref{eq:refinement_def} requires integrating over the
unknown scalar $y_\bz$. For a GP, however, the posterior covariance update
depends only on the query location, not on the realized function value. The
expectation therefore collapses, and $\alpha_{\mathrm{ref}}$ admits a closed
form.

\begin{proposition}[Closed-form gradient refinement]
\label{prop:refinement_closed}
Let $k_\calD(\cdot,\cdot)$ denote the GP posterior kernel after observing
$\calD$. Then $\alpha_{\mathrm{ref}}(\bz)$ is independent of the unobserved
value $y_\bz$ and equals
\begin{equation}
\label{eq:refinement_closed}
\alpha_{\mathrm{ref}}(\bz)
= \frac{\|\nabla_\bx k_\calD(\bx,\bz)\|^2}{k_\calD(\bz,\bz)+\sigma^2},
\end{equation}
where $\nabla_\bx k_\calD(\bx,\bz) = \partial k_\calD(\bx,\bz)/\partial\bx$
is the posterior cross-covariance gradient.
\end{proposition}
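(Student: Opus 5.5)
The plan is to use a rank-one (sequential conditioning) update of the GP posterior. Conditioning on one more noisy observation at $\bz$ is the same as conditioning the current posterior $\mathcal{GP}(\mu_\calD,k_\calD)$ on $y_\bz$. Since linear operators preserve Gaussianity, this also holds for the joint posterior of $(\nabla f(\bx), y_\bz)$.

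First I will write this joint law under $\calD$. It is jointly Gaussian with:
\begin{itemize}
\item gradient covariance $\bSigma_\bx$;
\item $\mathrm{Var}(y_\bz\mid\calD)=k_\calD(\bz,\bz)+\sigma^2$;
\item cross-covariance $\mathrm{Cov}(\nabla f(\bx),y_\bz\mid\calD)=\nabla_\bx k_\calD(\bx,\bz)$.
\end{itemize}
The cross-covariance formula holds because the observation noise is independent of $f$. It also requires interchanging differentiation and covariance, which is justified by the twice-differentiability of $k$ assumed for Eq.~\eqref{eq:posterior}.

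Next I apply the standard Gaussian conditioning formula. This gives
\begin{equation*}
\bSigma_{\bx|\calD\cup(\bz,y_\bz)}=\bSigma_\bx-\frac{\nabla_\bx k_\calD(\bx,\bz)\,\nabla_\bx k_\calD(\bx,\bz)^\top}{k_\calD(\bz,\bz)+\sigma^2}.
\end{equation*}
The right-hand side contains no $y_\bz$. So the integrand in Eq.~\eqref{eq:refinement_def} is deterministic, and the expectation over $y_\bz$ is trivial. Taking traces and using $\Tr(\mathbf{a}\mathbf{a}^\top)=\|\mathbf{a}\|^2$ yields Eq.~\eqref{eq:refinement_closed}.

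The main obstacle is making sure that this sequential rank-one update agrees with the batch formula for $\bSigma_\bx$ on the augmented data set $\calD\cup(\bz,y_\bz)$. There are two ways to check this:
\begin{itemize}
\item The direct route is the block-matrix inverse (Schur complement) of the augmented Gram matrix $\mathcal{K}$. This shows that the extra term is exactly the rank-one correction with Schur complement $k_\calD(\bz,\bz)+\sigma^2$.
\item Alternatively, I can invoke the tower property of Gaussian conditioning: conditioning on $\calD$ and then on $y_\bz$ equals conditioning on both at once.
\end{itemize}
Applying the derivative operators on both sides of the covariance commutes with this update, so the gradient blocks follow from the function-level identity.

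Finally, a degenerate case needs separate handling. If $\sigma^2=0$ and $k_\calD(\bz,\bz)=0$, the denominator vanishes. In that case $\nabla_\bx k_\calD(\bx,\bz)$ also vanishes by Cauchy--Schwarz for the joint covariance. The observation is then uninformative, and the acquisition is $0$ by convention. I will state this case rather than let it invalidate the formula.
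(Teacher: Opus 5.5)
Your proposal is correct and takes essentially the same route as the paper: a rank-one sequential conditioning update giving $\bSigma_{\bx|\calD\cup(\bz,y_{\bz})}=\bSigma_\bx-\mathbf{g}_{\bz}\mathbf{g}_{\bz}^\top/(k_\calD(\bz,\bz)+\sigma^2)$ with $\mathbf{g}_{\bz}=\nabla_\bx k_\calD(\bx,\bz)$, which does not depend on $y_{\bz}$, followed by taking traces. The paper applies the rank-one update to the posterior kernel $k_\calD$ and then differentiates, whereas you condition the joint Gaussian of $(\nabla f(\bx),y_{\bz})$ directly (the same computation), and your separate treatment of the degenerate $0/0$ case ($\sigma^2=0$, $k_\calD(\bz,\bz)=0$) is a small refinement the paper omits.
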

The proof is provided in Appendix~\ref{app:proof_refinement}.
Proposition~\ref{prop:refinement_closed} has two practical consequences.
First, gradient refinement can be optimized as a deterministic acquisition
function, with no Monte Carlo sampling.
Second, $\alpha_{\mathrm{ref}}(\bz)$ is large when the posterior gradient
covariance between $\bx$ and $\bz$ is strong and the posterior variance at
$\bz$ is not already negligible. Otherwise, querying $\bz$ adds little
information.


\subsection{The \method Algorithm}
\label{sec:algorithm}

We combine gradient refinement (Section~\ref{sec:twostage}) and progress-aware
exploitation (Section~\ref{sec:progress}) into \method, a two-stage local
optimization loop. Stage~1 selects auxiliary queries to sharpen the gradient
posterior at the current iterate, and Stage~2 moves along directions of maximum
expected progress under that refined posterior.

We initialize by drawing $n_{\mathrm{init}}$ random queries, forming a dataset
$\calD$, and fitting a GP. The random initialization ensures that the initial
surrogate has some coverage of the local neighborhood, preventing the gradient
posterior from being dominated by the prior. Then, for each outer iteration
$t=0,\ldots,T-1$ starting at $\bx_t$, Stage~1 sequentially selects
$\tau_{\mathrm{explore}}$ queries by maximizing the closed-form acquisition
$\bz^*\leftarrow\argmax_{\bz\in\calX}\alpha_{\mathrm{ref}}(\bz)$ in
Eq.~\eqref{eq:refinement_closed}, observes each $y=f(\bz^*)+\varepsilon$,
and refits the GP. The sequential selection allows each refinement query to
benefit from the information gained by previous queries, leading to more
efficient uncertainty reduction than batch selection. Stage~2 then repeatedly computes the unit direction of
maximum progress
$\bv^*\leftarrow\argmax_{\|\bv\|=1}\mathcal{P}(\bv)$ in
Eq.~\eqref{eq:max_progress} and takes a projected step
$\bx\leftarrow\Pi_{\calX}(\bx+\eta\bv^*)$ of size $\eta>0$, where
$\Pi_{\calX}:\mathbb{R}^d\to\calX$ is Euclidean projection onto the domain.
Each new point is evaluated and added to $\calD$ before the next direction is
chosen, so the posterior is updated after every move. This online updating
ensures that the progress-aware score reflects the most current gradient
belief, which is particularly important when the objective has varying
curvature across the local neighborhood. Exploitation stops after
$\tau_{\mathrm{exploit}}$ steps or when
$\mathcal{P}(\bv^*)<\tau_{\mathrm{thresh}}$, indicating that the posterior
predicts little conditional descent. The early-stopping criterion prevents
wasting queries on directions that are unlikely to yield meaningful progress,
which is especially valuable when the optimizer is approaching a local minimum. The outer iterate is then set to
$\bx_{t+1}\leftarrow\bx$, and after $T$ iterations \method returns the best
observed point $\argmin_{(\bx,y)\in\calD}\,y$.
Figure~\ref{fig:overview} illustrates one outer iteration of this loop.

\begin{figure*}[t]
\centering
\begin{tikzpicture}[
  x=1cm, y=1cm,
  >=Stealth,
  font=\sffamily,
  line cap=round,
  line join=round,
]
  \def\panelW{7.35}
  \def\panelH{4.95}
  \def\gap{1.35}
  \def\footH{0.72}

  \fill[gray!5,rounded corners=2.5pt] (0,0) rectangle (\panelW,\panelH);
  \fill[gray!5,rounded corners=2.5pt]
    ({\panelW+\gap},0) rectangle ({2*\panelW+\gap},\panelH);
  \draw[gray!30,rounded corners=2.5pt,line width=0.7pt]
    (0,0) rectangle (\panelW,\panelH);
  \draw[gray!30,rounded corners=2.5pt,line width=0.7pt]
    ({\panelW+\gap},0) rectangle ({2*\panelW+\gap},\panelH);

  \begin{scope}
    \clip (0,\footH) rectangle (\panelW,\panelH);

    \foreach \s in {1,...,5}
      \draw[gray!32,line width=0.6pt]
        (-0.7,-0.5) ellipse ({1.35+0.85*\s} and {1.0+0.65*\s});

    \coordinate (xta) at (3.70,2.85);

    \fill[gradblue,opacity=0.14]
      (xta) -- ++(150:2.35) arc (150:255:2.35) -- cycle;
    \draw[gradblue!65,dashed,line width=0.95pt] (xta) -- ++(150:2.35);
    \draw[gradblue!65,dashed,line width=0.95pt] (xta) -- ++(255:2.35);
    \draw[gradblue!45,dashed,line width=0.6pt]
      (xta) ++(150:2.15) arc (150:255:2.15);

    \fill[gradblue,opacity=0.50]
      (xta) -- ++(192:2.05) arc (192:214:2.05) -- cycle;
    \draw[gradblue!90!black,line width=0.9pt] (xta) -- ++(192:2.05);
    \draw[gradblue!90!black,line width=0.9pt] (xta) -- ++(214:2.05);
    \draw[->,gradblue!90!black,line width=1.65pt] (xta) -- ++(203:1.95);

    \foreach \px/\py in {5.75/3.95, 5.95/1.55, 2.35/4.35} {
      \draw[gradblue!70,dotted,line width=0.95pt] (xta) -- (\px,\py);
      \fill[gradblue!85!black] (\px,\py) circle (2.4pt);
      \fill[white] (\px,\py) circle (0.9pt);
    }
    \node[text=gradblue!80!black,font=\footnotesize,anchor=west]
      at (5.95,3.95) {$\bz_i^*$};

    \fill[grapepurple] (xta) circle (3.0pt);
    \fill[white] (xta) circle (1.1pt);
    \node[text=grapepurple!80!black,font=\footnotesize,anchor=south west]
      at ($(xta)+(0.16,0.08)$) {$\bx_t$};

    \node[fill=white,inner sep=1pt,text=gradblue!55!black,font=\footnotesize,
          anchor=south] at ($(xta)+(152:2.05)$) {prior};
    \node[fill=white,inner sep=1pt,text=gradblue!90!black,
          font=\footnotesize\bfseries,anchor=east]
      at ($(xta)+(222:1.9)$) {posterior};

    \draw[{Stealth[length=1.5mm]}-{Stealth[length=1.5mm]},
          gradblue!70!black,line width=0.85pt]
      ($(xta)+(160:1.35)$) to[bend left=12] ($(xta)+(192:1.35)$);
    \node[fill=white,draw=gradblue!55,line width=0.4pt,rounded corners=1pt,
          inner sep=1.4pt,text=gradblue!80!black,font=\scriptsize\bfseries]
      at ($(xta)+(175:1.65)$) {Thm.~1};
  \end{scope}

  \node[fill=gradblue,text=white,rounded corners=2pt,inner sep=2.8pt,
        font=\bfseries\footnotesize,anchor=north west]
    at (0.18,\panelH-0.18) {Stage 1};
  \fill[white,opacity=0.92] (0.05,0.05) rectangle (\panelW-0.05,\footH);
  \draw[gray!25,line width=0.4pt] (0.15,\footH) -- (\panelW-0.15,\footH);
  \node[anchor=center,font=\small\bfseries,align=center]
    at (0.5*\panelW,0.5*\footH)
    {(a) Gradient Refinement\\[-2pt]
     {\normalfont\scriptsize query nearby to sharpen the descent posterior}};

  \draw[-{Stealth[length=2.6mm]},gradblue!60!black,line width=1.6pt]
    ({\panelW+0.12},2.65) -- ({\panelW+\gap-0.12},2.65);
  \node[fill=white,inner sep=1.5pt,text=gradblue!60!black,
        font=\scriptsize\bfseries,align=center]
    at ({\panelW+0.5*\gap},3.15) {refined\\[-1pt]belief};

  \begin{scope}[shift={({\panelW+\gap},0)}]
    \clip (0,\footH) rectangle (\panelW,\panelH);

    \foreach \s in {1,...,5}
      \draw[gray!32,line width=0.6pt]
        (-0.7,-0.5) ellipse ({1.35+0.85*\s} and {1.0+0.65*\s});

    \coordinate (xtb) at (3.55,2.95);
    \coordinate (xt1) at ($(xtb)+(215:2.15)$);

    \fill[progred,opacity=0.06]
      (xtb) -- ++(145:2.05) arc (145:245:2.05) -- cycle;

    \draw[->,gray!45,line width=1.05pt] (xtb) -- ++(155:1.25);
    \draw[->,gray!60,line width=1.1pt]  (xtb) -- ++(168:1.4);
    \draw[->,progred!45!gray,line width=1.2pt] (xtb) -- ++(188:1.55);
    \draw[->,progred!75!gray,line width=1.3pt] (xtb) -- ++(238:1.4);

    \draw[->,gray!70!black,dashed,line width=1.3pt]
      (xtb) -- ++(138:1.2) coordinate (shallow);
    \node[fill=white,inner sep=0.8pt,text=gray!65!black,font=\footnotesize,
          align=right,anchor=east]
      at ($(shallow)+(-0.08,0.05)$) {certain, shallow};

    \draw[->,progred!90!black,line width=2.05pt] (xtb) -- (xt1);
    \node[fill=white,inner sep=0.8pt,text=progred!85!black,
          font=\footnotesize\bfseries,align=left,anchor=west]
      at ($(xtb)+(215:1.15)+(0.42,0.05)$) {steep progress};

    \fill[grapepurple] (xtb) circle (3.0pt);
    \fill[white] (xtb) circle (1.1pt);
    \node[fill=white,inner sep=0.6pt,text=grapepurple!80!black,
          font=\footnotesize,anchor=south west]
      at ($(xtb)+(0.16,0.1)$) {$\bx_t$};
    \fill[grapepurple] (xt1) circle (3.0pt);
    \fill[white] (xt1) circle (1.1pt);
    \node[fill=white,inner sep=0.6pt,text=grapepurple!80!black,
          font=\footnotesize,anchor=north]
      at ($(xt1)+(0.0,-0.2)$) {$\bx_{t+1}$};

    \shade[bottom color=gray!40,top color=progred]
      (6.35,1.35) rectangle (6.65,3.55);
    \draw[gray!45,line width=0.4pt] (6.35,1.35) rectangle (6.65,3.55);
    \node[font=\footnotesize,anchor=south] at (6.5,3.62) {progress};
    \node[font=\scriptsize,anchor=west] at (6.75,3.4) {high};
    \node[font=\scriptsize,anchor=west] at (6.75,1.5) {low};
  \end{scope}

  \node[fill=progred,text=white,rounded corners=2pt,inner sep=2.8pt,
        font=\bfseries\footnotesize,anchor=north west]
    at ({\panelW+\gap+0.18},\panelH-0.18) {Stage 2};
  \fill[white,opacity=0.92]
    ({\panelW+\gap+0.05},0.05) rectangle ({2*\panelW+\gap-0.05},\footH);
  \draw[gray!25,line width=0.4pt]
    ({\panelW+\gap+0.15},\footH) -- ({2*\panelW+\gap-0.15},\footH);
  \node[anchor=center,font=\small\bfseries,align=center]
    at ({\panelW+\gap+0.5*\panelW},0.5*\footH)
    {(b) Progress-Aware Exploitation\\[-2pt]
     {\normalfont\scriptsize move into the direction where expected progress is largest}};
\end{tikzpicture}
\caption{One outer iteration of \method. \textbf{(a)}~Nearby queries
sharpen a wide prior descent belief into a narrower posterior
(Theorem~\ref{thm:refinement}). \textbf{(b)}~Candidate directions are
ranked by their expected progress.}
\label{fig:overview}
\end{figure*}

\section{Theoretical Analysis}
\label{sec:theory}

This section analyzes the two design choices introduced above: gradient
refinement and progress-aware exploitation.

\subsection{Notation and Assumptions}

Let $\calX\subset\mathbb{R}^d$ be a convex compact domain and $\calH$ a
reproducing kernel Hilbert space (RKHS) on $\calX$ with kernel $k$.
At an iterate $\bx_t$, let $\calD_t$ denote all observations collected so
far, and let $\bmu_t$ and $\bSigma_t$ denote the posterior mean and
covariance of $\nabla f(\bx_t)$ under $\calD_t$. Our analysis aims to answer two
questions. First, does a gradient-refinement observation reduce uncertainty
for a fixed iterate? Second, when that uncertainty vanishes and the posterior
mean is consistent, which direction does progress-aware exploitation select?
To carry out the analysis, we impose the following standard regularity conditions.

\begin{assumption}
\label{ass:kernel}
The kernel $k$ is stationary, four times continuously differentiable, and
positive definite.
\end{assumption}
\begin{assumption}
\label{ass:interior}
The iterates $\{\bx_t\}$ remain in the interior of $\calX$.
\end{assumption}
\begin{assumption}
\label{ass:rkhs}
The objective lies in the RKHS of $k$ with bounded norm:
$f\in\calH$ and $\|f\|_\calH\leq B$, where $B<\infty$ is a fixed bound on
the RKHS norm of $f$.
\end{assumption}

Assumptions~\ref{ass:kernel}--\ref{ass:rkhs} are standard
\citep{bull2011convergence,srinivas2012information}. Since $k$ is four times
differentiable, $f$ is twice continuously differentiable. On the compact
domain $\calX$ this implies that $\nabla f$ is Lipschitz, i.e., $f$ is
$L$-smooth for some finite constant $L>0$ \citep{wu2023behavior}.
Compactness ensures that the direction optimization and smoothness constants
are well-defined, while the interior-iterate assumption avoids boundary
projections in the limiting direction result. The RKHS assumption is a regularity condition on the objective: it requires that $f$ lies in the function space induced by the kernel, which is a common assumption in GP-based optimization analyses \citep{srinivas2012information}. While this assumption may not hold exactly in practice, it provides a useful framework for analyzing the convergence behavior of GP-based methods. In our experiments, we use the Mat\'ern-$5/2$ kernel, whose RKHS consists of functions that are twice continuously differentiable, a reasonable assumption for many practical objectives, including neural network loss landscapes and prompt accuracy surfaces.

\subsection{Gradient Uncertainty Reduction}

Building on
Proposition~\ref{prop:refinement_closed}, the result below formalizes the
one-step contraction of $\Tr(\bSigma_\bx)$ and identifies when it is strict.

\begin{theorem}[Gradient refinement reduces uncertainty]
\label{thm:refinement}
Under the joint GP model, for any candidate location $\bz$,
\[
\mathbb{E}_{y_\bz}\!\left[\Tr(\bSigma_{\bx|\calD\cup(\bz,y_\bz)})\right]
\leq \Tr(\bSigma_\bx),
\]
with equality if and only if $\nabla_\bx k_\calD(\bx,\bz)=\mathbf{0}$.
\end{theorem}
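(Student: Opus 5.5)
The plan is to invoke Proposition~\ref{prop:refinement_closed}, which already shows that the expected one-step reduction is deterministic and equal to $\|\nabla_\bx k_\calD(\bx,\bz)\|^2/(k_\calD(\bz,\bz)+\sigma^2)$. The inequality $\mathbb{E}_{y_\bz}[\Tr(\bSigma_{\bx|\calD\cup(\bz,y_\bz)})]\le\Tr(\bSigma_\bx)$ is then equivalent to $\alpha_{\mathrm{ref}}(\bz)\ge 0$. The equality case is equivalent to $\alpha_{\mathrm{ref}}(\bz)=0$.

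To keep the argument self-contained, I would briefly recall why the closed form holds. Conditioning the joint Gaussian of $(\nabla f(\bx), y_\bz)$ given $\calD$ is a rank-one Schur-complement update:
\[
\bSigma_{\bx|\calD\cup(\bz,y_\bz)}=\bSigma_\bx-\frac{\mathbf{c}\,\mathbf{c}^\top}{k_\calD(\bz,\bz)+\sigma^2},\qquad \mathbf{c}=\nabla_\bx k_\calD(\bx,\bz).
\]
Here $\mathbf{c}$ is the posterior cross-covariance between $\nabla f(\bx)$ and $f(\bz)$. This identification follows from differentiating the posterior kernel in its first argument, which is legitimate by Assumption~\ref{ass:kernel}. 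The update does not depend on $y_\bz$, so the expectation collapses. Taking the trace and using $\Tr(\mathbf{c}\mathbf{c}^\top)=\|\mathbf{c}\|^2$ recovers Proposition~\ref{prop:refinement_closed}.

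For the inequality, observe that the numerator $\|\mathbf{c}\|^2$ is nonnegative. The denominator $k_\calD(\bz,\bz)+\sigma^2$ is nonnegative because a posterior variance is a Schur complement of a PSD matrix.

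The main obstacle is the degenerate case $\sigma^2=0$ with $k_\calD(\bz,\bz)=0$, for example when $\bz$ has already been observed noiselessly, since the formula is then $0/0$. I would handle it as follows. If $k_\calD(\bz,\bz)=0$, then $f(\bz)$ is a.s.\ determined by $\calD$. By Cauchy--Schwarz for the joint PSD covariance, $|\mathbf{c}_i|^2\le (\bSigma_\bx)_{ii}\,k_\calD(\bz,\bz)=0$, so $\mathbf{c}=\mathbf{0}$. In that case observing $y_\bz$ adds no information, the posterior covariance is unchanged, and equality holds, consistent with the stated characterization $\nabla_\bx k_\calD(\bx,\bz)=\mathbf{0}$.

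Otherwise the denominator is strictly positive, and the reduction is zero if and only if $\|\mathbf{c}\|^2=0$, that is, if and only if $\nabla_\bx k_\calD(\bx,\bz)=\mathbf{0}$. This gives both directions of the equality statement, and strict decrease whenever the cross-covariance gradient is nonzero.
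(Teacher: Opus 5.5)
Your proposal is correct and follows the paper's proof: both invoke Proposition~\ref{prop:refinement_closed} to reduce the claim to $\alpha_{\mathrm{ref}}(\bz)\ge 0$, with equality exactly when the numerator $\|\nabla_\bx k_\calD(\bx,\bz)\|^2$ vanishes. You also treat the degenerate noiseless case $k_\calD(\bz,\bz)+\sigma^2=0$, using the Cauchy--Schwarz bound to force $\nabla_\bx k_\calD(\bx,\bz)=\mathbf{0}$; this closes a $0/0$ gap that the paper's one-line argument skips.
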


\textit{Proof.}
From Proposition~\ref{prop:refinement_closed},
$\alpha_{\mathrm{ref}}(\bz)=\|\nabla_\bx k_\calD(\bx,\bz)\|^2/(k_\calD(\bz,\bz)+\sigma^2)\geq 0$.
Hence $\mathbb{E}[\Tr(\bSigma_{\bx|\calD\cup(\bz,y_\bz)})]
=\Tr(\bSigma_\bx)-\alpha_{\mathrm{ref}}(\bz)\leq\Tr(\bSigma_\bx)$, with equality iff
$\nabla_\bx k_\calD(\bx,\bz)=\mathbf{0}$. \hfill$\square$

Since $\alpha_{\mathrm{ref}}(\bz)$ equals the expected one-step trace
reduction, selecting $\bz^*=\argmax_\bz\alpha_{\mathrm{ref}}(\bz)$ is
greedily optimal and repeated refinement monotonically decreases
$\Tr(\bSigma_\bx)$.

\subsection{Asymptotic Descent Direction}

The result below shows that, once
the gradient posterior is sharp and consistent, the progress-aware direction
converges to normalized steepest descent.

\begin{theorem}[Progress-aware exploitation approaches steepest descent]
\label{thm:progress_convergence}
Suppose $\nabla f(\bx)\neq\mathbf{0}$,
$\Tr(\bSigma_\bx)\to 0$, and
$\bmu_\bx\to\nabla f(\bx)$. Then the progress-aware direction converges to
the true steepest descent direction:
\[
\lim_{\Tr(\bSigma_\bx)\to 0}\;\bv^*
= -\frac{\nabla f(\bx)}{\|\nabla f(\bx)\|}.
\]
\end{theorem}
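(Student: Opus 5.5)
The plan is to show that $\mathcal{P}$ converges uniformly on the unit sphere to the deterministic limit $\mathcal{P}_0(\bv)=\max(0,-\bv^\top\nabla f(\bx))$, and then to invoke a standard argmax-continuity argument. Write $g=\nabla f(\bx)\neq\mathbf{0}$ and $h(\gamma)=\phi(\gamma)/\Phi(-\gamma)-\gamma$. Then $\mathcal{P}(\bv)=\sigma_\bv h(\mu_\bv/\sigma_\bv)=\psi(\mu_\bv,\sigma_\bv)$, where
\[
\psi(m,s)=s\,h(m/s)=\mathbb{E}[-Z\mid Z<0],\qquad Z\sim\calN(m,s^2).
\]
We extend $\psi$ to $s=0$ by $\psi(m,0)=-m$ for $m<0$. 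For $m\ge 0$ the conditional expectation is ill-defined at $s=0$, so we set $\psi(m,0)=0$, matching the limits $\mathcal{P}\to0$ for $\gamma\to+\infty$ noted after Definition~\ref{def:progress}.

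\textbf{Step 1 (pointwise behaviour of $\psi$).} We use the known bounds on the Mills ratio to control $\psi$:
\[
\max(0,-m)\le \psi(m,s)\le \max(0,-m)+c\,s\quad\text{for } m\le 0 .
\]
The upper bound holds because $h(\gamma)+\gamma\to 0$ as $\gamma\to-\infty$ and $h(\gamma)+\gamma$ is bounded on $(-\infty,0]$. For $m>0$ the quantity $\psi(m,s)$ is the mean of $-Z$ conditioned on a tail event. It can stay of order $s$, and in fact it behaves like $s^2/m$ for large $m/s$, but it is bounded by $c'(s+\ldots)$ in the sense that it tends to $0$. The key observation is that it stays below $\psi(m',s')$ for any genuinely descending direction.

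Since $\sigma_\bv^2=\bv^\top\bSigma_\bx\bv\le\Tr(\bSigma_\bx)$ uniformly over $\|\bv\|=1$, and $|\mu_\bv-\bv^\top g|\le\|\bmu_\bx-g\|$, it follows that
\[
\sup_{\|\bv\|=1}\bigl|\mathcal{P}(\bv)-\mathcal{P}_0(\bv)\bigr|\to 0 .
\]
This requires a uniform continuity estimate for $\psi$ in $m$, namely $|\partial_m\psi|\le 1$. We obtain it from $\partial_m\psi=-h'(\gamma)$, because $h'(\gamma)\in(0,1)$ (the truncated-normal variance is less than $s^2$).

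\textbf{Step 2 (argmax continuity).} The limit $\mathcal{P}_0(\bv)=\max(0,-\bv^\top g)$ has a unique maximizer $\bv_0=-g/\|g\|$ on the sphere, with value $\|g\|$. Moreover, $\mathcal{P}_0(\bv)\le\|g\|\cos\theta$, where $\theta$ is the angle between $\bv$ and $\bv_0$, so the maximum is well separated. Given $\epsilon>0$, every $\bv$ with $\|\bv-\bv_0\|\ge\epsilon$ satisfies $\mathcal{P}_0(\bv)\le\|g\|-\delta(\epsilon)$. Once the sup-error from Step~1 falls below $\delta/2$, every maximizer $\bv^*$ of $\mathcal{P}$ therefore lies within $\epsilon$ of $\bv_0$. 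This gives $\bv^*\to-\nabla f(\bx)/\|\nabla f(\bx)\|$, and the statement holds for any selection from the argmax set, which is nonempty by compactness and continuity of $\mathcal{P}$ when $\bSigma_\bx\succ0$.

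\textbf{Main obstacle.} The delicate point is the uniformity in Step~1 near directions where $\sigma_\bv\to0$ at different rates and $\mu_\bv$ is near $0$, that is, directions nearly orthogonal to $g$. There $\gamma_\bv$ can take any value, and the continuity of $\psi$ at $(0,0)$ must be checked. I would handle this with the elementary bound $0\le\psi(m,s)\le\max(0,-m)+s\,h(0)\cdot C$, which gives $\psi(m,s)\le|m|+Cs$ for all $(m,s)$. This bound suffices near $m=0$ because only an upper bound is needed away from $\bv_0$. Near $\bv_0$, only the lower bound $\psi\ge -m$ is needed, and that inequality holds since $\mathbb{E}[-Z\mid Z<0]\ge\mathbb{E}[-Z]$. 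In fact these two one-sided bounds give the separation argument directly, without the full uniform convergence.
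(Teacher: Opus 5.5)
Your proof is correct. Its skeleton matches the paper's: uniform convergence of $\mathcal{P}$ on the sphere to $[-\bv^\top\nabla f(\bx)]^+$, then convergence of maximizers using the unique maximizer $\bv_0:=-\nabla f(\bx)/\|\nabla f(\bx)\|$ of the limit. However, you establish the crucial uniformity by a different and stronger device.

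The paper proves only pointwise convergence, via a three-case analysis with the asymptotic expansion of $R(\gamma)=\phi(\gamma)/\Phi(-\gamma)$. It then asserts uniformity from a joint-Lipschitz property, justified by $\Phi(-\gamma_\bv)$ being bounded away from zero ``relative to $\min_\bv|c_\bv|$''. That justification does not hold up:
\begin{itemize}
\item $\min_\bv|\bv^\top\nabla f(\bx)|=0$ for $d\ge 2$;
\item $\Phi(-\gamma_\bv)\to 0$ on the whole ascent hemisphere;
\item $\mathcal{P}$ is not Lipschitz in $\bSigma_\bx$ at $\mathbf{0}$ along orthogonal directions, where it behaves like $\sqrt{2/\pi}\,\sigma_\bv$.
\end{itemize}
These are exactly the near-orthogonal directions you flagged as the main obstacle.

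Your sandwich
\[
\max(0,-m)\;\le\;\psi(m,s)\;\le\;\max(0,-m)+\sqrt{2/\pi}\,s\qquad(s>0)
\]
handles all directions at once. The lower bound follows from $R>0$ and $-Z>0$ on $\{Z<0\}$. The upper bound follows because $R$ is increasing, so $R(\gamma)\le R(0)$ for $\gamma\le 0$, and $h=R-\gamma$ is decreasing, so $h(\gamma)\le h(0)$ for $\gamma>0$. It gives an explicit rate
\[
E:=\sup_{\|\bv\|=1}\bigl|\mathcal{P}(\bv)-[-\bv^\top\nabla f(\bx)]^+\bigr|\;\le\;\sqrt{2\Tr(\bSigma_\bx)/\pi}+\|\bmu_\bx-\nabla f(\bx)\|.
\]
Your well-separation step then replaces Berge's theorem, which would itself need $\mathcal{P}$ extended jointly continuously to $\bSigma_\bx=\mathbf{0}$, i.e.\ exactly your sandwich. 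It is also quantitative: once $E<\|\nabla f(\bx)\|/2$, every maximizer satisfies $\|\bv^*-\bv_0\|\le 2\sqrt{E/\|\nabla f(\bx)\|}$. So your route is more elementary, yields rates, and repairs the paper's Step~2. The paper's case analysis mainly adds intuition about the limiting values.

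\textbf{Slips to fix when writing it up.}
\begin{itemize}
\item \emph{Sign of $h'$.} In fact $\partial_m\psi=h'(\gamma)$, with $h'(\gamma)=R(\gamma)\bigl(R(\gamma)-\gamma\bigr)-1=-\mathrm{Var}(X\mid X>\gamma)\in(-1,0)$ for $X\sim\calN(0,1)$. It is not $-h'$ with $h'\in(0,1)$. The sign matters, since $h(\gamma)\le h(0)$ for $\gamma>0$ uses that $h$ is decreasing. The Lipschitz-in-$m$ estimate is then unnecessary, because $\max(0,\cdot)$ is $1$-Lipschitz.
\item \emph{The $m>0$ case in Step~1.} Replace the vague discussion there, including the comparison with ``genuinely descending'' directions, by the global upper bound above.
\item \emph{Do not weaken the bound to $|m|+Cs$.} For directions near $-\bv_0$ that only gives $\mathcal{P}(\bv)\lesssim\|\nabla f(\bx)\|$, which does not separate them from the maximizer. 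It is the $\max(0,-m)$ form that rules out the ascent hemisphere.
\end{itemize}
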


\textit{Proof sketch.}
As $\Tr(\bSigma_\bx)\to 0$, $\bSigma_\bx\to\mathbf{0}$ in spectral norm while
$\bmu_\bx\to\nabla f(\bx)$ by assumption. Let
$R(\gamma):=\phi(\gamma)/\Phi(-\gamma)$ denote the inverse Mills ratio and
$[a]^+:=\max(a,0)$. For any unit direction $\bv$, if
$\bv^\top\nabla f(\bx)<0$, then $\gamma_\bv\to-\infty$, $R(\gamma_\bv)\to 0$,
and $\mathcal{P}(\bv)\to-\bv^\top\nabla f(\bx)>0$. If
$\bv^\top\nabla f(\bx)>0$, then $\gamma_\bv\to+\infty$ and
$\mathcal{P}(\bv)\to 0$ by the expansion
$R(\gamma)=\gamma+\gamma^{-1}+\mathcal{O}(\gamma^{-3})$
\citep{grimmett2020probability}. Hence
$\mathcal{P}(\bv)\to[-\bv^\top\nabla f(\bx)]^+$
pointwise on $\mathbb{S}^{d-1}$. Uniform convergence on this compact set,
together with the Berge Maximum Theorem \citep{berge1963topological}, yields
$\bv^*\to -\nabla f(\bx)/\|\nabla f(\bx)\|$. A complete proof appears in
Appendix~\ref{app:proof_thm2}. \hfill$\square$


\subsection{Interpretation}

The two theorems above characterize the complementary roles of the two stages in \method{}. Theorem~\ref{thm:refinement} guarantees that each refinement query strictly reduces total gradient uncertainty unless the candidate point is already uninformative about the local gradient, a condition that holds whenever $\bz$ lies in a region where the posterior cross-covariance gradient $\nabla_\bx k_\calD(\bx,\bz)$ vanishes. This monotonic contraction ensures that the directional ranking used by the exploitation stage becomes increasingly reliable as refinement proceeds. Theorem~\ref{thm:progress_convergence} then shows that, once the posterior has sharpened sufficiently and the posterior mean is consistent with the true gradient, the progress-aware direction recovers normalized steepest descent. Together, these results establish that \method{} behaves as a principled approximation to first-order optimization: refinement drives the posterior toward a regime where the progress-aware score is well-calibrated, and exploitation then selects directions that are asymptotically equivalent to those chosen by a gradient oracle. We empirically validate Theorems~\ref{thm:refinement} and~\ref{thm:progress_convergence} in Appendix~\ref{app:emp_theory}.

\section{Experiments}
\label{sec:experiments}

In this section, we evaluate \method{} on two high-dimensional, gradient-free tasks and compare it against representative global and local BO methods. Code is available at \url{https://github.com/richardcsuwandi/grape}.

\subsection{Benchmarks}

\subsubsection{Black-Box Adversarial Attacks}
\label{sec:exp_attack}

Adversarial attacks reveal how small, visually inconspicuous input changes can
cause otherwise accurate classifiers to fail, making them an important tool
for assessing the robustness of deployed models \citep{ru2020bayesopt}. In many realistic settings,
however, the attacker can query only the model's outputs and cannot access its
parameters or gradients. Given such a black-box classifier and an image $\bz$,
we therefore seek, through function queries only, a small perturbation $\bx$
such that the perturbed image $\bz+\bx$ is misclassified. This is a natural
fit for Bayesian optimization: each query corresponds to evaluating the
classifier on a candidate perturbation, and the goal is to find a successful
attack with as few queries as possible. Following the practice of
\citet{cheng2021convergence}, we randomly select images from MNIST
\citep{lecun1998gradient} ($d=28\times28=784$) and CIFAR-10
\citep{krizhevsky2009learning} ($d=32\times32=1024$) and add a
perturbation under an $\ell_\infty$ constraint that makes the trained deep
network misclassify the image. Concretely:
\begin{itemize}[leftmargin=*,itemsep=1pt,topsep=2pt]
\item For MNIST, we use the same fully trained network as
\citet{cheng2021convergence} and adopt the constraint $\|\bx\|_\infty\le0.3$.
\item For CIFAR-10, we fully train a ResNet-18 \citep{he2016deep} using SGD
with a cosine-annealed learning rate from $0.1$ to $0$, momentum $0.9$, and
weight decay $5\times10^{-4}$ for $200$ epochs, and adopt the constraint
$\|\bx\|_\infty\le0.2$.
\end{itemize}

\subsubsection{LLM Prompt Optimization}
\label{sec:exp_llm}
The choice of
prompt can substantially affect an LLM's performance, yet designing effective
prompts by hand is labor-intensive and does not scale across models and tasks
\citep{zhou2022large}. Automatic prompt optimization is therefore valuable,
but each objective evaluation requires LLM inference, while the search space
is discrete in text space and high-dimensional in embedding space. This combination of expensive evaluations and high-dimensional discrete search makes prompt optimization a challenging testbed for black-box optimization methods. We use BoLT
\citep{chew2026bolt} because its precomputed prompt scores make evaluations
inexpensive and reproducible. It provides $5{,}014$ prompts for mathematical
reasoning, scored by their MATH-500 accuracy \citep{hendrycks2021measuring}
under Qwen3-14B \citep{yang2025qwen3}. Each prompt is embedded with
EmbeddingGemma \citep{schechtervera2025embeddinggemma} at the four Matryoshka
truncations $d\in\{128,256,512,768\}$ \citep{kusupati2022matryoshka}, and we correspondingly denote the
four tasks as PO-128 through PO-768. The search space is therefore the
discrete candidate pool $\calX_{\mathrm{PO}}=\{\bz_i\}_{i=1}^{5014}\subset\mathbb{R}^d$, where $f_{\mathrm{PO}}(\bz_i)$ is the corresponding accuracy and
the reference optimum is the highest score in the pool. Because \method
and the local baselines propose continuous steps, each proposal is
projected to its nearest embedding in the pool before evaluation. This projection introduces a quantization effect that is unique to discrete search spaces: even if the optimizer identifies a promising direction in embedding space, the actual evaluation point is constrained to the nearest available prompt, which may limit the achievable progress per step.

\subsection{Baselines}

We compare \method{} against a diverse set of baselines spanning global, local, first-order, and second-order approaches. These include random search as a sanity check, and two global BO methods: vanilla BO (\vbo) with expected improvement \citep{jones1998efficient} and \dlogei with a dimension-scaled length-scale prior \citep{hvarfner2024vanilla}. We further include the local first-order methods \mpd \citep{nguyen2022local}, \gibo \citep{muller_local_2021}, and \minucb \citep{fan2024minimizing}, as well as the trust-region method \turbo \citep{eriksson2019scalable}. Finally, we include \nestbo \citep{tang2025nestbo}, a recent second-order method that targets a modified Newton step using jointly learned gradient and Hessian posteriors. This selection covers the main design choices in high-dimensional BO: global versus local search, first-order versus second-order derivative information, and trust-region versus gradient-based step selection.

\subsection{Experimental Setup}

All GP-based methods share the same modeling protocol: an ARD
Mat\'ern-$5/2$ kernel with length scales and output scale fit by maximum
marginal likelihood, together with common initial points and evaluation
budgets. We use the Mat\'ern-$5/2$ kernel because it provides a good balance
between smoothness and flexibility: it is twice differentiable, which is
sufficient for the gradient posterior used by all first-order methods, while
avoiding the oversmoothing behavior of the squared-exponential kernel in high
dimensions \citep{xu2025standard}. For \method{}, we
set $\tau_{\mathrm{explore}}=5$, allow at most
$\tau_{\mathrm{exploit}}=30$ exploitation steps, use
step size $\eta=0.1\cdot\mathrm{diam}(\calX)$, where
$\mathrm{diam}(\calX)=\sup_{\bx,\bx'\in\calX}\|\bx-\bx'\|$ is the diameter
of the feasible domain, and early-stop threshold
$\tau_{\mathrm{thresh}}=5\times10^{-3}$, and optimize each direction with
$10$ random restarts and a warm start. On prompt optimization, we set
observation noise variance to $\sigma^2=0.001$ and run $T=200$ iterations
from five random prompts. Prompt-optimization results are averaged over $10$ random
seeds, attack query counts in Table~\ref{tab:attack} are averaged over
$10$ independent runs, while the success-rate curves in
Figure~\ref{fig:attack} are evaluated on $50$ randomly selected images per
dataset. Error bars and shaded regions denote one standard deviation.
Implementations use PyTorch \citep{paszke2019pytorch} and GPyTorch
\citep{gardner2018gpytorch} on one NVIDIA RTX~4090.
Full protocols, baseline hyperparameters, and compute details are in Appendix~\ref{app:exp_details}.

\subsection{Evaluation Metrics}

For adversarial attacks, we report the number of queries to the first
successful attack and the attack success rate under a fixed query budget
\citep{shu2023zeroth}. Fewer queries and a higher success rate are better.
The query count directly measures sample efficiency, while the success rate
curve reveals how quickly each method accumulates successful attacks as the
budget increases, a distinction that matters in practice, where the available
query budget may be constrained by time or API costs.
For prompt optimization, we report the log simple regret
\citep{chew2026bolt}, $\ell_t=\log\bigl(f(\bx^*)-f(\hat\bx_t)\bigr)$, where $f(\hat\bx_t)$ is the best score after $t$ evaluations and
$f(\bx^*)$ is the largest score in the candidate pool.
The logarithmic scale compresses the dynamic range of regret values, making it easier to distinguish methods in the low-regret regime where absolute differences are small but practically meaningful.
We assess significance with a paired two-sided Wilcoxon signed-rank test
\citep{wilcoxon1945individual}, pairing methods by run, where
improvements with $p<0.05$ are marked with $^{*}$.

\subsection{Results}

We compare \method{} with the baselines on both benchmarks.

\begin{table}[t]
\centering
\setlength{\tabcolsep}{4pt}
{\small
\begin{tabular}{lcccc}
\toprule
 & \multicolumn{2}{c}{MNIST ($d=784$)}
 & \multicolumn{2}{c}{CIFAR-10 ($d=1024$)} \\
\cmidrule(lr){2-3}\cmidrule(lr){4-5}
Method & \# of queries & Ratio
       & \# of queries & Ratio \\
\midrule
Random  & $1971\pm498^{*}$ & $9.1\times$ & $3222\pm662^{*}$ & $9.4\times$ \\
\vbo    & $1589\pm271^{*}$ & $7.4\times$ & $2563\pm436^{*}$ & $7.5\times$ \\
\dlogei & $1381\pm222^{*}$ & $6.4\times$ & $2114\pm372^{*}$ & $6.2\times$ \\
\turbo  & $1143\pm192^{*}$ & $5.3\times$ & $1239\pm191^{*}$ & $3.6\times$ \\
\gibo   & $981\pm184^{*}$ & $4.5\times$ & $1613\pm271^{*}$ & $4.7\times$ \\
\minucb & $891\pm169^{*}$ & $4.1\times$ & $1449\pm213^{*}$ & $4.2\times$ \\
\mpd    & $827\pm128^{*}$ & $3.8\times$ & $1338\pm236^{*}$ & $3.9\times$ \\
\nestbo & $719\pm136^{*}$ & $3.3\times$ & $1123\pm197^{*}$ & $3.3\times$ \\
\textbf{GRAPE} & \textbf{216$\pm$42} & \textbf{1.0$\times$} & \textbf{341$\pm$53} & \textbf{1.0$\times$} \\
\bottomrule
\end{tabular}
}
\caption{Average number of queries ($\downarrow$) $\pm$ standard deviation required for a successful attack, across 10 independent runs. Ratio is each method's query count divided by that of \method{} (higher is worse). $^{*}$ indicates that \method{} requires significantly fewer queries than the marked baseline under a paired Wilcoxon signed-rank test ($p<0.05$).}
\label{tab:attack}
\end{table}

\begin{figure}[t]
  \centering
  \includegraphics[width=\textwidth]{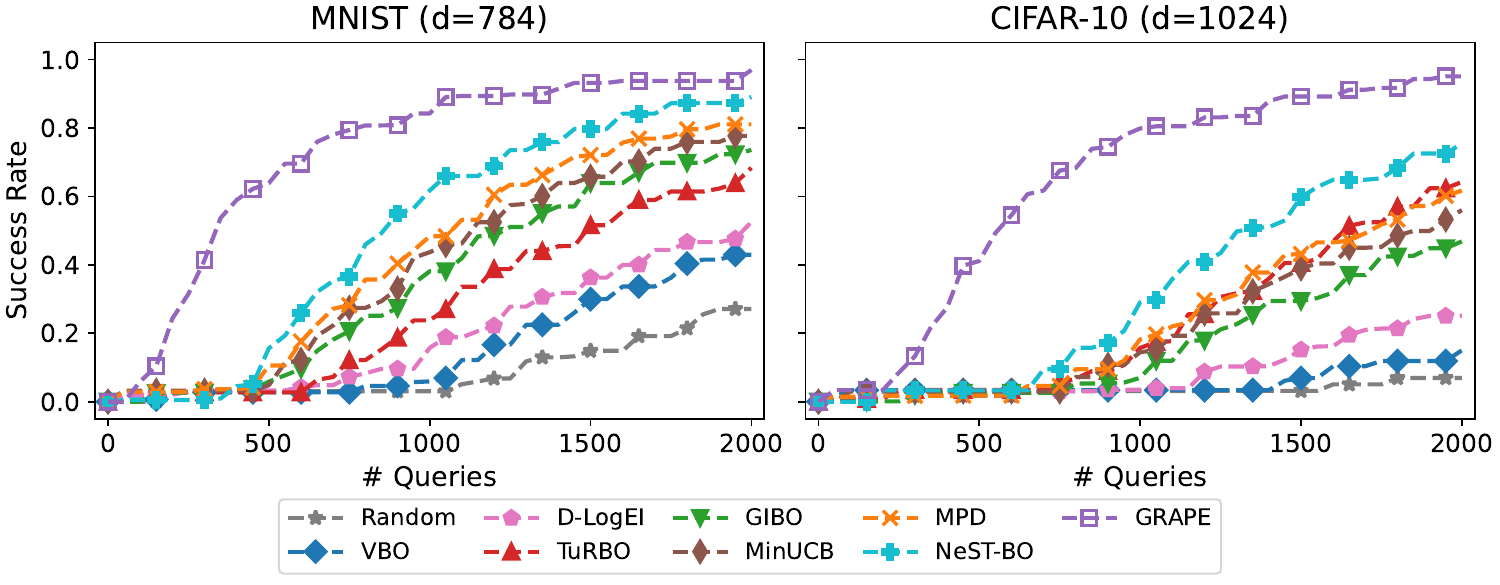}
  \caption{Attack success rate ($\uparrow$) versus number of queries on
  $50$ randomly selected images from MNIST ($d=784$) and CIFAR-10 ($d=1024$).}
  \label{fig:attack}
\end{figure}

\begin{figure*}[t]
  \centering
  \IfFileExists{figs/prompt_opt_log_regret.pdf}{%
    \includegraphics[width=\textwidth]{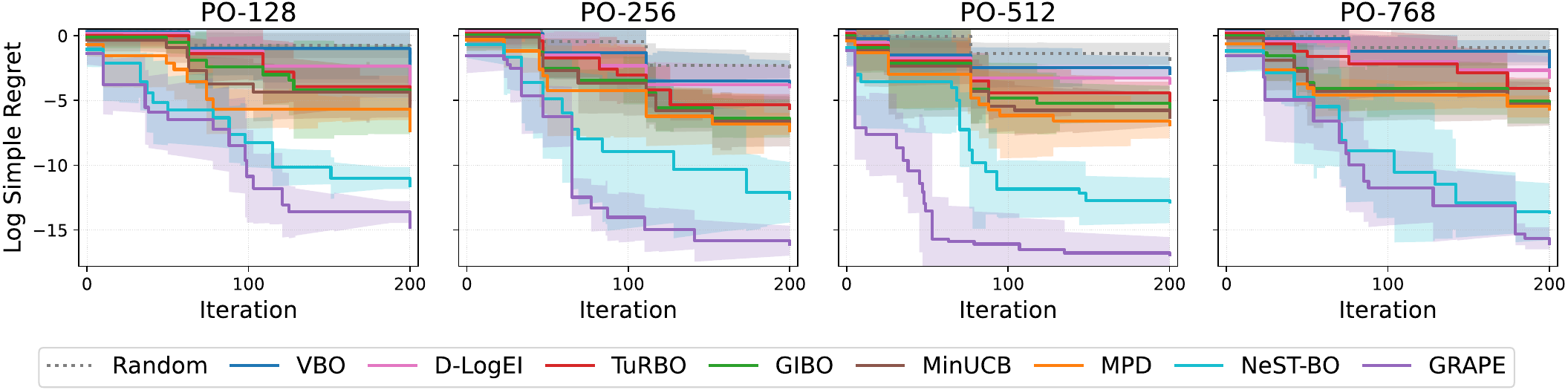}
  }{%
    \fbox{\rule{0pt}{2.8cm}\rule{\textwidth}{0pt}}%
  }
  \caption{Log simple regret ($\downarrow$) versus BO iteration on BoLT
  prompt-optimization tasks PO-128--PO-768. Shaded regions denote one
  standard deviation over 10 seeds.}
  \label{fig:prompt_opt_regret}
\end{figure*}

\subsubsection{Black-Box Adversarial Attacks}

Table~\ref{tab:attack} reports the number of queries needed for a
successful attack. In these high-dimensional settings, \method{}
substantially outperforms every baseline, reaching misclassification with
far fewer queries. All pairwise comparisons are statistically significant
($p<0.05$). The largest gaps appear on CIFAR-10 against global BO and random
search: \vbo{} and Random require $7.5\times$ and $9.4\times$ as many queries
as \method{}, reflecting the difficulty of global search in perturbation spaces
with $d\approx 10^3$. Among local methods, \nestbo{} is the closest competitor
but still needs $3.3\times$ more queries on both datasets. Other first-order
local methods (\mpd{}, \minucb{}, \gibo{}) trail further ($3.8$--$4.7\times$),
as expected when optimization prioritizes descent certainty over the magnitude
of conditional decrease. \method{} also outperforms \turbo{}, despite its strong
record on query-efficient black-box attacks \citep{ru2020bayesopt}. On these
tasks, refining the local gradient before each perturbation update stabilizes
direction choice in a high-dimensional pixel space, and progress-aware steps
target large expected loss reductions rather than timid moves that barely
approach the decision boundary.

The advantage of \method{} is particularly pronounced on CIFAR-10 relative to
MNIST. The CIFAR-10 decision boundaries are more complex due to the richer
feature structure of natural color images, which makes the local gradient
posterior more uncertain for a given number of queries. Gradient refinement
therefore provides greater relative benefit on CIFAR-10, as it concentrates
queries where they most reduce directional uncertainty. Figure~\ref{fig:attack} confirms this trend
under a fixed budget of $2000$ queries: \method{} reaches the highest success
rate earliest, followed by \nestbo{} and the other local methods, while global
baselines lag, especially on CIFAR-10.

\subsubsection{LLM Prompt Optimization}

Figure~\ref{fig:prompt_opt_regret} shows the log simple regret against iteration on
the prompt-optimization tasks, as the embedding
dimension increases from $128$ to $768$. Overall, performance can be grouped
into three regimes. In the first, global methods (\vbo{}, \dlogei{}) and random
search plateau early, typically between $-2$ and $-4$ log simple regret, and
perform worse as the dimension grows. This matches the observation from
\citet{chew2026bolt}, who report that standard BO struggles on high-dimensional
prompt embeddings and becomes nearly indistinguishable from random search: with
a fixed budget, the surrogate cannot usefully resolve a discrete candidate pool
whose geometry grows harder as Matryoshka truncations retain more coordinates.
In the second regime, first-order local methods (\mpd{}, \minucb{}, \gibo{})
and \turbo{} improve on that baseline but stall at a mid-tier plateau,
suggesting that descent probability or trust-region control alone cannot sustain
improvement once easy gains are exhausted. In the third regime, only \method{}
and \nestbo{} reach the deep low-regret region (below $-10$). \nestbo{} is the
strongest competitor, yet \method{} descends sooner and finishes with the
lowest regret at every embedding dimension. In this setting, gradient refinement
sharpens the surrogate before each prompt edit, and progress-aware selection
favors candidates with large expected score gains when they succeed, which helps
identify substantive prompt changes rather than small edits that are merely
likely to help in a sparse, high-dimensional embedding space.

Notably, the gap between \method{} and \nestbo{} widens as the dimension
increases: at $d=128$, both methods reach similar final regret, but at
$d=768$, \method{} achieves roughly $3$ additional log-units of improvement.
This trend is consistent with the different derivative complexities of the two
approaches: \nestbo{} must learn a $d\times d$ Hessian posterior from function
evaluations alone, which becomes increasingly data-hungry as $d$ grows, whereas
\method{} operates on the first-order gradient posterior whose uncertainty
scales more favorably. The result suggests that, in very high-dimensional
discrete search spaces, investing the evaluation budget in sharpening the
gradient belief and exploiting it with a progress-aware criterion is more
sample-efficient than attempting to recover second-order information. We report
additional results, including ablations, hyperparameter sensitivity, and
wall-clock timings, in Appendices~\ref{app:ablation}, \ref{app:sensitivity},
and~\ref{app:wallclock}.

\section{Conclusion}
\label{sec:conclusion}

We introduced \method{}, a two-stage local BO framework that couples gradient refinement with progress-aware exploitation to achieve strong query efficiency in high-dimensional black-box optimization. The gradient refinement stage uses a closed-form acquisition to greedily reduce posterior uncertainty at the current iterate, ensuring that the directional ranking used for exploitation is grounded in a sharp local gradient estimate. The progress-aware exploitation stage then selects update directions by maximizing the expected decrease conditional on descent, a criterion that restores the magnitude information discarded by pure descent-probability maximization. We proved that refinement monotonically reduces local gradient uncertainty and that the progress-aware direction converges to true steepest descent as the posterior sharpens. Empirically, \method{} achieves an average $5.4\times$ speedup over baselines on black-box adversarial attacks and outperforms the second-best method by $3.8$ log-units on LLM prompt optimization, demonstrating that conditioning on descent while accounting for progress magnitude yields substantially more productive queries.

\paragraph{Discussion.}
Despite these gains, several limitations point to natural extensions. First, as a purely local method, \method{} inherits the well-known sensitivity to initialization: poor starting points or highly multimodal landscapes can trap the iterates in suboptimal basins. While trust-region restarts \citep{eriksson2019scalable} and local--global switching schemes \citep{mcleod2018optimization,diouane2023trego} provide principled escape mechanisms, integrating them with the two-stage loop requires careful budget allocation between exploration, exploitation, and restart phases. Second, the progress-aware score conditions on a Gaussian directional derivative, which assumes the local linearization is adequate over the chosen step size. In regions of high curvature, a truncated-Gaussian approximation may misestimate the true conditional decrease, suggesting that second-order corrections or adaptive step-size rules could improve robustness. Third, the cubic scaling of exact GP inference, $\mathcal{O}(|\calD|^3)$ per outer iteration, becomes a bottleneck as the evaluation budget grows. While rank-one and sparse updates \citep{csato2002sparse,titsias2009variational} alleviate this cost, they introduce additional approximation error whose interaction with the refinement--exploitation loop has not been characterized. We leave these directions to future work.
\bibliographystyle{apalike}
\bibliography{refs}

\begin{thebibliography}{}

\bibitem[Arango et~al., 2021]{arango2021hpo}
Arango, S.~P., Jomaa, H.~S., Wistuba, M., and Grabocka, J. (2021).
\newblock {HPO-B}: A large-scale reproducible benchmark for black-box {HPO} based on {OpenML}.
\newblock In {\em Advances in Neural Information Processing Systems Datasets and Benchmarks Track}.

\bibitem[Berge, 1963]{berge1963topological}
Berge, C. (1963).
\newblock {\em Topological Spaces}.
\newblock Oliver and Boyd.

\bibitem[Berkenkamp et~al., 2016]{berkenkamp2016safe}
Berkenkamp, F., Schoellig, A.~P., and Krause, A. (2016).
\newblock Safe controller optimization for quadrotors with {G}aussian processes.
\newblock In {\em IEEE International Conference on Robotics and Automation}, pages 491--496.

\bibitem[Bull, 2011]{bull2011convergence}
Bull, A.~D. (2011).
\newblock Convergence rates of efficient global optimization algorithms.
\newblock {\em Journal of Machine Learning Research}, 12:2879--2904.

\bibitem[Calamai and Mor{\'e}, 1987]{calamai1987projected}
Calamai, P.~H. and Mor{\'e}, J.~J. (1987).
\newblock Projected gradient methods for linearly constrained problems.
\newblock {\em Mathematical Programming}, 39(1):93--116.

\bibitem[Carlini and Wagner, 2017]{carlini2017towards}
Carlini, N. and Wagner, D. (2017).
\newblock Towards evaluating the robustness of neural networks.
\newblock In {\em IEEE Symposium on Security and Privacy}, pages 39--57.

\bibitem[Cheng et~al., 2021]{cheng2021convergence}
Cheng, S., Wu, G., and Zhu, J. (2021).
\newblock On the convergence of prior-guided zeroth-order optimization algorithms.
\newblock In {\em Advances in Neural Information Processing Systems}, volume~34, pages 14620--14631.

\bibitem[Chew et~al., 2026]{chew2026bolt}
Chew, R. W.~T., Chen, Z., Hemachandra, A., and Low, B. K.~H. (2026).
\newblock {B}o{LT}: A benchmark to democratize black-box optimization research for expensive {LLM} tasks.
\newblock {\em arXiv preprint arXiv:2605.17000}.

\bibitem[Csat{\'o} and Opper, 2002]{csato2002sparse}
Csat{\'o}, L. and Opper, M. (2002).
\newblock Sparse on-line {G}aussian processes.
\newblock {\em Neural Computation}, 14(3):641--668.

\bibitem[Diouane et~al., 2023]{diouane2023trego}
Diouane, Y., Picheny, V., Le~Riche, R., and Scotto Di~Perrotolo, A. (2023).
\newblock {TREGO}: A trust-region framework for efficient global optimization.
\newblock {\em Journal of Global Optimization}, 86(1):1--23.

\bibitem[Doumont et~al., 2026]{doumont2026highdimBO}
Doumont, C., Fan, D., Maus, N., Gardner, J.~R., Moss, H., and Pleiss, G. (2026).
\newblock We still don't understand high-dimensional {B}ayesian optimization.
\newblock In {\em International Conference on Artificial Intelligence and Statistics}.

\bibitem[Eriksson and Jankowiak, 2021]{eriksson2021saasbo}
Eriksson, D. and Jankowiak, M. (2021).
\newblock High-dimensional {B}ayesian optimization with sparse axis-aligned subspaces.
\newblock In {\em Conference on Uncertainty in Artificial Intelligence}, volume 161 of {\em Proceedings of Machine Learning Research}, pages 493--503. PMLR.

\bibitem[Eriksson et~al., 2019]{eriksson2019scalable}
Eriksson, D., Pearce, M., Gardner, J., Turner, R.~D., and Poloczek, M. (2019).
\newblock Scalable global optimization via local {B}ayesian optimization.
\newblock In {\em Advances in Neural Information Processing Systems}, volume~32.

\bibitem[Fan et~al., 2024]{fan2024minimizing}
Fan, Z., Wang, W., Ng, S.~H., and Hu, Q. (2024).
\newblock Minimizing {UCB}: a better local search strategy in local {B}ayesian optimization.
\newblock In {\em Advances in Neural Information Processing Systems}, volume~37, pages 130602--130634.

\bibitem[Frazier, 2018]{frazier2018tutorial}
Frazier, P.~I. (2018).
\newblock {B}ayesian optimization.
\newblock In {\em Recent Advances in Optimization and Modeling of Contemporary Problems}, INFORMS TutORials in Operations Research, pages 255--278. INFORMS.

\bibitem[Gardner et~al., 2018]{gardner2018gpytorch}
Gardner, J.~R., Pleiss, G., Bindel, D., Weinberger, K.~Q., and Wilson, A.~G. (2018).
\newblock {GP}y{T}orch: Blackbox matrix-matrix {G}aussian process inference with {GPU} acceleration.
\newblock In {\em Advances in Neural Information Processing Systems}, volume~31.

\bibitem[Garnett, 2023]{garnett2023bayesoptbook}
Garnett, R. (2023).
\newblock {\em {B}ayesian Optimization}.
\newblock Cambridge University Press.

\bibitem[Grimmett and Stirzaker, 2020]{grimmett2020probability}
Grimmett, G.~R. and Stirzaker, D.~R. (2020).
\newblock {\em Probability and Random Processes}.
\newblock Oxford University Press, 4th edition.

\bibitem[He et~al., 2016]{he2016deep}
He, K., Zhang, X., Ren, S., and Sun, J. (2016).
\newblock Deep residual learning for image recognition.
\newblock In {\em IEEE Conference on Computer Vision and Pattern Recognition}, pages 770--778.

\bibitem[Hendrycks et~al., 2021]{hendrycks2021measuring}
Hendrycks, D., Burns, C., Kadavath, S., Arora, A., Basart, S., Tang, E., Song, D., and Steinhardt, J. (2021).
\newblock Measuring mathematical problem solving with the {MATH} dataset.
\newblock In {\em Advances in Neural Information Processing Systems Datasets and Benchmarks Track}.

\bibitem[Hennig et~al., 2022]{hennig2022probabilistic}
Hennig, P., Osborne, M.~A., and Kersting, H.~P. (2022).
\newblock {\em Probabilistic Numerics: Computation as Machine Learning}.
\newblock Cambridge University Press.

\bibitem[Hvarfner et~al., 2024]{hvarfner2024vanilla}
Hvarfner, C., Hellsten, E.~O., and Nardi, L. (2024).
\newblock Vanilla {B}ayesian optimization performs great in high dimensions.
\newblock In {\em International Conference on Machine Learning}, volume 235 of {\em Proceedings of Machine Learning Research}, pages 20793--20817. PMLR.

\bibitem[Jones et~al., 1998]{jones1998efficient}
Jones, D.~R., Schonlau, M., and Welch, W.~J. (1998).
\newblock Efficient global optimization of expensive black-box functions.
\newblock {\em Journal of Global Optimization}, 13(4):455--492.

\bibitem[Krizhevsky and Hinton, 2009]{krizhevsky2009learning}
Krizhevsky, A. and Hinton, G. (2009).
\newblock Learning multiple layers of features from tiny images.
\newblock Technical report, University of Toronto.

\bibitem[Kusupati et~al., 2022]{kusupati2022matryoshka}
Kusupati, A., Bhatt, G., Rege, A., Wallingford, M., Sinha, A., Ramanujan, V., Howard-Snyder, W., Chen, K., Kakade, S., Jain, P., and Farhadi, A. (2022).
\newblock Matryoshka representation learning.
\newblock In {\em Advances in Neural Information Processing Systems}, volume~35, pages 30233--30249.

\bibitem[LeCun et~al., 1998]{lecun1998gradient}
LeCun, Y., Bottou, L., Bengio, Y., and Haffner, P. (1998).
\newblock Gradient-based learning applied to document recognition.
\newblock {\em Proceedings of the IEEE}, 86(11):2278--2324.

\bibitem[McLeod et~al., 2018]{mcleod2018optimization}
McLeod, M., Roberts, S., and Osborne, M.~A. (2018).
\newblock Optimization, fast and slow: Optimally switching between local and {B}ayesian optimization.
\newblock In {\em International Conference on Machine Learning}, volume~80 of {\em Proceedings of Machine Learning Research}, pages 3443--3452. PMLR.

\bibitem[Mo{\v{c}}kus, 1978]{mockus1978application}
Mo{\v{c}}kus, J. (1978).
\newblock The application of {B}ayesian methods for seeking the extremum.
\newblock In Dixon, L. C.~W. and Szeg{\"o}, G.~P., editors, {\em Towards Global Optimisation 2}, pages 117--129. North-Holland.

\bibitem[M{\"u}ller et~al., 2021]{muller_local_2021}
M{\"u}ller, S., von Rohr, A., and Trimpe, S. (2021).
\newblock Local policy search with {B}ayesian optimization.
\newblock In {\em Advances in Neural Information Processing Systems}, volume~34, pages 20708--20720.

\bibitem[Nguyen et~al., 2022]{nguyen2022local}
Nguyen, Q., Wu, K., Gardner, J., and Garnett, R. (2022).
\newblock Local {B}ayesian optimization via maximizing probability of descent.
\newblock In {\em Advances in Neural Information Processing Systems}, volume~35, pages 13190--13202.

\bibitem[Papenmeier et~al., 2025]{papenmeier2025understanding}
Papenmeier, L., Poloczek, M., and Nardi, L. (2025).
\newblock Understanding high-dimensional {B}ayesian optimization.
\newblock In {\em International Conference on Machine Learning}, volume 267 of {\em Proceedings of Machine Learning Research}, pages 47902--47923. PMLR.

\bibitem[Paszke et~al., 2019]{paszke2019pytorch}
Paszke, A., Gross, S., Massa, F., Lerer, A., Bradbury, J., Chanan, G., Killeen, T., Lin, Z., Gimelshein, N., Antiga, L., et~al. (2019).
\newblock {PyTorch}: An imperative style, high-performance deep learning library.
\newblock In {\em Advances in Neural Information Processing Systems}, volume~32.

\bibitem[Rasmussen and Williams, 2006]{williams2006gaussian}
Rasmussen, C.~E. and Williams, C. K.~I. (2006).
\newblock {\em Gaussian Processes for Machine Learning}.
\newblock MIT Press.

\bibitem[Ru et~al., 2020]{ru2020bayesopt}
Ru, B., Cobb, A.~D., Blaas, A., and Gal, Y. (2020).
\newblock {BayesOpt} adversarial attack.
\newblock In {\em International Conference on Learning Representations}.

\bibitem[Schechter~Vera et~al., 2025]{schechtervera2025embeddinggemma}
Schechter~Vera, H. et~al. (2025).
\newblock {EmbeddingGemma}: Powerful and lightweight text representations.
\newblock {\em arXiv preprint arXiv:2509.20354}.

\bibitem[Shahriari et~al., 2016]{shahriari2016taking}
Shahriari, B., Swersky, K., Wang, Z., Adams, R.~P., and de~Freitas, N. (2016).
\newblock Taking the human out of the loop: A review of {B}ayesian optimization.
\newblock {\em Proceedings of the IEEE}, 104(1):148--175.

\bibitem[Shields et~al., 2021]{shields2021bayesian}
Shields, B.~J., Stevens, J., Li, J., Parasram, M., Damani, F., Martinez~Alvarado, J.~I., Janey, J.~M., Adams, R.~P., and Doyle, A.~G. (2021).
\newblock {B}ayesian reaction optimization as a tool for chemical synthesis.
\newblock {\em Nature}, 590(7844):89--96.

\bibitem[Shu et~al., 2023]{shu2023zeroth}
Shu, Y., Dai, Z., Sng, W., Verma, A., Jaillet, P., and Low, B. K.~H. (2023).
\newblock Zeroth-order optimization with trajectory-informed derivative estimation.
\newblock In {\em International Conference on Learning Representations}.

\bibitem[Snoek et~al., 2012]{snoek2012practical}
Snoek, J., Larochelle, H., and Adams, R.~P. (2012).
\newblock Practical {B}ayesian optimization of machine learning algorithms.
\newblock In {\em Advances in Neural Information Processing Systems}, volume~25.

\bibitem[Srinivas et~al., 2012]{srinivas2012information}
Srinivas, N., Krause, A., Kakade, S.~M., and Seeger, M.~W. (2012).
\newblock Gaussian process optimization in the bandit setting: No regret and experimental design.
\newblock {\em Journal of Machine Learning Research}, 13:3183--3212.

\bibitem[Suwandi et~al., 2025]{suwandi2025adaptive}
Suwandi, R., Yin, F., Wang, J., Li, R., Chang, T.-H., and Theodoridis, S. (2025).
\newblock Adaptive kernel design for {B}ayesian optimization is a piece of {CAKE} with {LLMs}.
\newblock In {\em Advances in Neural Information Processing Systems}, volume~38, pages 132690--132723.

\bibitem[Tang et~al., 2026]{tang2025nestbo}
Tang, W.-T., Kudva, A., and Paulson, J.~A. (2026).
\newblock {N}e{ST}-{BO}: Fast local {B}ayesian optimization via {N}ewton-step targeting of gradient and {H}essian information.
\newblock In {\em International Conference on Artificial Intelligence and Statistics}.

\bibitem[Titsias, 2009]{titsias2009variational}
Titsias, M.~K. (2009).
\newblock Variational learning of inducing variables in sparse {G}aussian processes.
\newblock In {\em Proceedings of the Twelfth International Conference on Artificial Intelligence and Statistics}, volume~5 of {\em Proceedings of Machine Learning Research}, pages 567--574. PMLR.

\bibitem[Wang and Dowling, 2022]{wang2022bayesian}
Wang, K. and Dowling, A.~W. (2022).
\newblock {B}ayesian optimization for chemical products and functional materials.
\newblock {\em Current Opinion in Chemical Engineering}, 36:100728.

\bibitem[Wang et~al., 2016]{wang2016rembo}
Wang, Z., Hutter, F., Zoghi, M., Matheson, D., and de~Freitas, N. (2016).
\newblock {B}ayesian optimization in a billion dimensions via random embeddings.
\newblock {\em Journal of Artificial Intelligence Research}, 55:361--387.

\bibitem[White et~al., 2021]{white2021bananas}
White, C., Neiswanger, W., and Savani, Y. (2021).
\newblock {BANANAS}: {B}ayesian optimization with neural architectures for neural architecture search.
\newblock In {\em AAAI Conference on Artificial Intelligence}, volume~35, pages 10293--10301.

\bibitem[Wilcoxon, 1945]{wilcoxon1945individual}
Wilcoxon, F. (1945).
\newblock Individual comparisons by ranking methods.
\newblock {\em Biometrics Bulletin}, 1(6):80--83.

\bibitem[Wu et~al., 2023]{wu2023behavior}
Wu, K., Kim, K., Garnett, R., and Gardner, J.~R. (2023).
\newblock The behavior and convergence of local {B}ayesian optimization.
\newblock In {\em Advances in Neural Information Processing Systems}, volume~36, pages 73497--73523.

\bibitem[Xu et~al., 2025]{xu2025standard}
Xu, Z., Wang, H., Phillips, J.~M., and Zhe, S. (2025).
\newblock Standard {G}aussian process is all you need for high-dimensional {B}ayesian optimization.
\newblock In {\em International Conference on Learning Representations}.

\bibitem[Yang et~al., 2025]{yang2025qwen3}
Yang, A. et~al. (2025).
\newblock {Qwen3} technical report.
\newblock {\em arXiv preprint arXiv:2505.09388}.

\bibitem[Zhou et~al., 2023]{zhou2022large}
Zhou, Y., Muresanu, A.~I., Han, Z., Paster, K., Pitis, S., Chan, H., and Ba, J. (2023).
\newblock Large language models are human-level prompt engineers.
\newblock In {\em International Conference on Learning Representations}.

\end{thebibliography}

\clearpage
\appendix
\section*{Appendix}
\addcontentsline{toc}{section}{Appendix}

This appendix contains supplementary material for the main text.
We lead with the ablation that isolates each design choice, then provide
protocols and robustness checks, followed by theoretical material and
technical derivations.

\paragraph{Component analysis.}
Section~\ref{app:ablation} reports an ablation experiment on the two main stages of our method: (1) the use of gradient refinement in the exploration stage, and (2) the use of progress-aware exploitation in the exploitation stage. This analysis isolates the contribution of each component.

\paragraph{Protocols and robustness.}
Section~\ref{app:exp_details} gives complete experimental details, including
benchmark protocols, GP and method-specific hyperparameters, and compute
information.
Section~\ref{app:sensitivity} studies sensitivity to
$\tau_{\mathrm{explore}}$, $\tau_{\mathrm{exploit}}$, $\eta$, and
$\tau_{\mathrm{thresh}}$.
Section~\ref{app:kernel} compares the robustness of \method{} under Mat\'ern
and squared-exponential kernels.
Section~\ref{app:wallclock} reports wall-clock timings.

\paragraph{Theory.}
Section~\ref{app:emp_theory} empirically validates Theorems~\ref{thm:refinement} and~\ref{thm:progress_convergence}.
Section~\ref{app:proof_thm2} provides the full proof of
Theorem~\ref{thm:progress_convergence} (convergence of the progress-aware direction to steepest descent).

\paragraph{Method details.}
Section~\ref{app:kernel_deriv} gives explicit componentwise expressions for
the left- and right-acting kernel derivatives used in the gradient
posterior.
Section~\ref{app:proof_refinement} proves the closed-form expression for the
gradient-refinement acquisition (Proposition~\ref{prop:refinement_closed}).
Section~\ref{app:connections} expands the connection between progress-aware
exploitation and related criteria (\mpd, unconditional expected descent,
PI, and EI).
Section~\ref{app:grad_progress} derives the analytic gradient of the
progress-aware exploitation score used in the projected gradient ascent
solver.
Section~\ref{app:optimization} discusses the optimization procedure and
computational cost.

\section{Ablation Study}
\label{app:ablation}

To separate the two design choices in \method{}, we compare the following four variants
on the black-box adversarial attack and LLM prompt optimization tasks used in the main text:
\begin{itemize}[leftmargin=*]
  \item \textbf{\mpd{}} (baseline): original exploration and
        most-probable-descent exploitation \citep{nguyen2022local}.
  \item \textbf{\mpd{}-Refine}: Stage~1 uses gradient refinement; Stage~2 keeps \mpd{} exploitation.
  \item \textbf{\method{}-RandExp}: Stage~1 draws uniform random queries
        (same budget $\tau_{\mathrm{explore}}$); Stage~2 uses progress-aware
        exploitation.
  \item \textbf{\method{}} (full): gradient refinement and progress-aware
        exploitation.
\end{itemize}
All the experimental protocols including the hyperparameters, seeds, and budgets match the main text.
\begin{table}[h]
\centering
\setlength{\tabcolsep}{4pt}
\small
\begin{tabular}{lccc}
\toprule
Variant & MNIST (\#Q $\downarrow$)
        & PO-128 ($\ell_T\downarrow$) & PO-768 ($\ell_T\downarrow$) \\
\midrule
\mpd{} (baseline) & $827\pm128$ & $-7.1\pm1.2$ & $-5.9\pm1.4$ \\
\mpd{}-Refine     & $741\pm119$ & $-7.8\pm1.3$ & $-6.6\pm1.5$ \\
\method{}-RandExp & $371\pm68$  & $-11.8\pm1.6$ & $-12.6\pm1.7$ \\
\textbf{\method{} (full)}
                  & $\mathbf{216\pm42}$
                  & $\mathbf{-13.9\pm1.3}$
                  & $\mathbf{-15.7\pm1.5}$ \\
\bottomrule
\end{tabular}%
\caption{Ablation of gradient refinement vs.\ progress-aware exploitation
(mean $\pm$ std over $10$ runs). Lower is better. \mpd{} and full \method{}
match the main-paper settings; each hybrid changes one stage.}
\label{tab:ablation_supp}
\end{table}

\mpd{}-Refine offers only a modest improvement over \mpd{}, consistent with the findings of \citet{nguyen2022local}, who observed that combining trace-based refinement with most-probable-descent moves did not reliably outperform \gibo{}. In contrast, \method{}-RandExp closes much of the gap to full \method{} across all three benchmarks, suggesting that the direction score is the more impactful of the two design choices. The improvement from \method{}-RandExp to full \method{} is smaller but consistent: $\alpha_{\mathrm{ref}}$ continues to provide a benefit when Stage~2 ranks directions by conditional progress, which leverages both $\bmu_\bx$ and $\bSigma_\bx$, instead of relying solely on descent probability. In summary, the two stages interact: refinement is most helpful when exploitation can utilize a sharper posterior, but provides minimal benefit to \mpd{} alone.

\section{Experimental Details}
\label{app:exp_details}

\subsection{Black-Box Adversarial Attacks}

Following \citet{cheng2021convergence}, we attack correctly classified images
from MNIST \citep{lecun1998gradient} ($d=28\times28=784$) and CIFAR-10
\citep{krizhevsky2009learning} ($d=32\times32=1024$) under an
$\ell_\infty$ perturbation constraint.
\begin{itemize}[leftmargin=*]
\item \textbf{MNIST.} We use the same fully trained CNN as
\citet{cheng2021convergence} with $\|\bx\|_\infty\le 0.3$ and a query budget
of $2000$ per attack.
\item \textbf{CIFAR-10.} We fully train a ResNet-18 \citep{he2016deep} with
SGD (cosine-annealed learning rate from $0.1$ to $0$, momentum $0.9$, weight
decay $5\times10^{-4}$, $200$ epochs) and use $\|\bx\|_\infty\le 0.2$ with a
query budget of $4000$.
\end{itemize}

\paragraph{Observation model.}
All methods are strictly black-box: they may query only the classifier's
\emph{logits} (pre-softmax scores). Hard predicted labels, true gradients,
and internal network parameters are unavailable.

\paragraph{Attack objective.}
Given a correctly classified image $\bz$ with true label $c$, we search for a
perturbation $\bx$ that misclassifies $\bz+\bx$. We minimize the
Carlini--Wagner-style margin loss
\citep{carlini2017towards,cheng2021convergence}
\begin{equation}
\label{eq:cw_margin}
L(\bx)
=
\mathrm{logit}_c(\bz+\bx)
-
\max_{j\neq c}\mathrm{logit}_j(\bz+\bx).
\end{equation}
Intuitively, $L(\bx)$ is the gap between the true-class logit and the strongest
competing logit. Thus $L(\bx)>0$ means $\bz+\bx$ is still classified as $c$,
while $L(\bx)<0$ means some other class scores higher and the attack has
succeeded. The GP surrogate models $L$ as a function of $\bx$ over the feasible
perturbation set
\begin{equation}
\calX
=
\bigl\{\bx:\|\bx\|_\infty\le\varepsilon\bigr\},
\end{equation}
with $\varepsilon=0.3$ on MNIST and $\varepsilon=0.2$ on CIFAR-10.

\paragraph{Query counting and success.}
Each classifier forward pass that returns logits for a candidate $\bx$ counts
as one query. An attack is declared successful at the first queried point with
$L(\bx)<0$; the reported query count is the number of evaluations up to and
including that point (or the full budget if the attack fails). All methods
minimize $L$ over $\calX$ under the same budgets above.

\paragraph{Feasible-set projection.}
Exploitation steps that leave $\calX$ are projected back by Euclidean
projection onto the $\ell_\infty$ ball. For \method, this is the update
$\bx\leftarrow\Pi_{\calX}(\bx+\eta\bv^*)$ described in
Section~\ref{sec:algorithm}
(with $\Pi_{\calX}$ acting coordinate-wise by clipping to
$[-\varepsilon,\varepsilon]^d$).

\paragraph{Evaluation protocol.}
Reported query counts average over $10$ independent runs; success-rate
curves are evaluated on $50$ randomly selected correctly classified images
per dataset under a fixed budget of $2000$ queries.

\subsection{LLM Prompt Optimization}

We use the BoLT benchmark \citep{chew2026bolt}: $5{,}014$ prompts for
mathematical reasoning, scored by MATH-500 accuracy
\citep{hendrycks2021measuring} under Qwen3-14B \citep{yang2025qwen3}.
Each prompt is embedded with EmbeddingGemma
\citep{schechtervera2025embeddinggemma} at Matryoshka truncations
$d\in\{128,256,512,768\}$ \citep{kusupati2022matryoshka}, yielding tasks
PO-128 through PO-768. The search space is the discrete candidate pool
$\calX_{\mathrm{PO}}=\{\bz_i\}_{i=1}^{5014}\subset\mathbb{R}^d$, and
$f_{\mathrm{PO}}(\bz_i)$ is the corresponding accuracy. Continuous proposals
are projected to the nearest embedding in the pool. We run $T=200$
iterations from five random initial prompts, with observation noise
$\sigma_N=0.001$, and report log simple regret over $10$ seeds.

\subsection{GP Hyperparameters}

All GP-based methods share the same modeling protocol unless noted otherwise.
We use an ARD Mat\'ern-$5/2$ kernel
\begin{align}
k(\bx,\bx')
&=
s^2\Bigl(1+\sqrt{5}\,r+\tfrac{5}{3}r^2\Bigr)\exp(-\sqrt{5}\,r),
\\
r
&=
\sqrt{\sum_{i=1}^{d}\frac{(x_i-x'_i)^2}{\ell_i^2}}.
\end{align}
with per-dimension length scales $\ell_i$ and output scale $s^2$ estimated by
maximum marginal likelihood (L-BFGS with $10$ restarts), matching BoTorch's
default Mat\'ern surrogate with a gamma prior on the length scales.
This choice is consistent with evidence that Mat\'ern kernels are less prone
to vanishing training gradients than squared-exponential (SE) kernels in high
dimensions \citep{xu2025standard}, and with the Mat\'ern-$5/2$ option in the
vanilla-BO definition of \citet{hvarfner2024vanilla}.
For adversarial attacks the noise variance is fixed at $\sigma^2=10^{-4}$;
for prompt optimization we use $\sigma_N=0.001$ as above. All methods start
from the same initial points and share the same evaluation budget.
An SE-kernel comparison appears in Section~\ref{app:kernel}.

\subsection{Hyperparameters for \method}

The default hyperparameters for \method{} are summarized in
Table~\ref{tab:hyperparams}. Adaptive schedules (when enabled) can scale
$\tau_{\mathrm{explore}}$, $\tau_{\mathrm{thresh}}$, and
$\tau_{\mathrm{exploit}}$ with dimension, GP length scale, dataset size, and
recent improvement. All reported results use the fixed settings above
($\mathtt{adaptive\_params=False}$).

\begin{table}[h]
\centering
\begin{tabular}{ll}
\toprule
Parameter & Value \\
\midrule
Refinement budget $\tau_{\mathrm{explore}}$ & $5$ \\
Exploitation budget $\tau_{\mathrm{exploit}}$ & $30$ (max) \\
Step size $\eta$ & $0.1\cdot\mathrm{diam}(\calX)$ \\
Early-stop threshold $\tau_{\mathrm{thresh}}$ & $5\times10^{-3}$ \\
Direction restarts & $10$ (with warm start) \\
Refinement candidates $N_{\mathrm{cand}}$ & $100$ (uniform in $\calX$) \\
Initial dataset size $n_{\mathrm{init}}$ & $10$ (attacks); $5$ (prompt opt.) \\
Outer iterations $T$ & until query budget (attacks); $200$ (prompt opt.) \\
\bottomrule
\end{tabular}%

\caption{Hyperparameters for \method.}
\label{tab:hyperparams}
\end{table}

\subsection{Baseline Hyperparameters}

All baselines use the same Mat\'ern-$5/2$ GP and marginal-likelihood training
protocol.
\begin{itemize}[leftmargin=*]
  \item \textbf{\mpd} \citep{nguyen2022local}: $5$ exploration samples per
        iterate; descent-probability threshold $0.65$; step size
        $\eta=0.01\cdot\mathrm{diam}(\calX)$; $10$ random restarts for
        direction optimization.
  \item \textbf{\gibo} \citep{muller_local_2021}: $5$ gradient-information
        samples per iterate; local radius $\delta=0.1$; learning rate
        $0.1$; normalized mean-gradient steps.
  \item \textbf{\minucb} \citep{fan2024minimizing}: local ball radius
        $0.1\cdot\mathrm{diam}(\calX)$; exploration interval $5$; UCB
        parameter $\beta=3$; multi-start L-BFGS for local UCB minimization.
  \item \textbf{\turbo} \citep{eriksson2019scalable}: $5$ trust regions;
        ARD length scales; success/failure thresholds and restart criteria as
        in the original paper.
  \item \textbf{\nestbo} \citep{tang2025nestbo}: jointly learned gradient and
        Hessian posteriors targeting a modified Newton step; default settings
        from the authors' implementation.
  \item \textbf{\vbo}: vanilla BO with Expected Improvement
        \citep{jones1998efficient}; multi-start L-BFGS acquisition
        optimization ($10$ restarts, $512$ raw samples).
  \item \textbf{\dlogei} \citep{hvarfner2024vanilla}: Expected Improvement
        with a dimension-scaled log-normal prior on length scales.
  \item \textbf{Random}: uniform sampling within the feasible set (continuous
        $\ell_\infty$ ball for attacks; uniform over the prompt pool for
        BoLT).
\end{itemize}

\subsection{Compute Resources}

All experiments run on an Ubuntu Linux workstation with an AMD Ryzen~9
7950X CPU (16~cores / 32~threads), 64~GB system RAM, and an NVIDIA GeForce
RTX~4090 GPU (24~GB VRAM; driver~$595.58.03$, CUDA~$13.2$).
Software versions follow \texttt{requirements.txt}: PyTorch~$2.13.0$
\citep{paszke2019pytorch}, GPyTorch~$1.15.2$ \citep{gardner2018gpytorch},
and BoTorch~$0.18.1$. Randomness is controlled by setting
\texttt{torch.manual\_seed} and \texttt{numpy.random.seed} to the trial seed
$s\in\{0,\ldots,9\}$ (or \texttt{base\_seed}+trial with
\texttt{base\_seed}$=42$). Wall-clock time is dominated by GP refits and, for
attacks, classifier forward passes. Per-method wall-clock measurements are
reported in Section~\ref{app:wallclock}.

\section{Hyperparameter Sensitivity}
\label{app:sensitivity}

We sweep one hyperparameter at a time around the defaults in
Table~\ref{tab:hyperparams}, holding the others fixed
($\mathtt{adaptive\_params=False}$), on two representative tasks: MNIST and
PO-128.


\begin{table}[h]
\centering
\setlength{\tabcolsep}{4pt}
\small
\begin{tabular}{llcc}
\toprule
Hyperparameter & Value
  & MNIST (\#Q $\downarrow$)
  & PO-128 ($\ell_T\downarrow$) \\
\midrule
\multirow{4}{*}{$\tau_{\mathrm{explore}}$}
  & $1$  & $287\pm61$ & $-11.4\pm1.5$ \\
  & $3$  & $238\pm49$ & $-13.1\pm1.4$ \\
  & $5$ (default) & $\mathbf{216\pm42}$ & $\mathbf{-13.9\pm1.3}$ \\
  & $10$ & $241\pm51$ & $-14.0\pm1.4$ \\
\midrule
\multirow{4}{*}{$\tau_{\mathrm{exploit}}$ (max)}
  & $10$ & $268\pm58$ & $-12.1\pm1.6$ \\
  & $20$ & $229\pm47$ & $-13.4\pm1.4$ \\
  & $30$ (default) & $\mathbf{216\pm42}$ & $\mathbf{-13.9\pm1.3}$ \\
  & $50$ & $211\pm44$ & $-14.0\pm1.3$ \\
\midrule
\multirow{4}{*}{$\eta\,/\,\mathrm{diam}(\calX)$}
  & $0.01$ & $392\pm74$ & $-10.6\pm1.7$ \\
  & $0.05$ & $251\pm53$ & $-12.9\pm1.5$ \\
  & $0.1$ (default) & $\mathbf{216\pm42}$ & $\mathbf{-13.9\pm1.3}$ \\
  & $0.2$ & $274\pm62$ & $-12.4\pm1.6$ \\
\midrule
\multirow{4}{*}{$\tau_{\mathrm{thresh}}$}
  & $10^{-3}$ & $209\pm45$ & $-14.1\pm1.4$ \\
  & $5\times10^{-3}$ (default) & $\mathbf{216\pm42}$ & $\mathbf{-13.9\pm1.3}$ \\
  & $10^{-2}$ & $244\pm52$ & $-13.0\pm1.5$ \\
  & $5\times10^{-2}$ & $318\pm69$ & $-10.9\pm1.8$ \\
\bottomrule
\end{tabular}%

\caption{One-at-a-time sensitivity of \method{} on MNIST and PO-128
(mean $\pm$ std over $10$ runs). Default rows match
Table~\ref{tab:hyperparams} and are bolded within each block.}
\label{tab:sensitivity}
\end{table}

Table~\ref{tab:sensitivity} is consistent with defaults that lie in a
reasonably flat region. Varying $\tau_{\mathrm{explore}}$ over $\{3,5,10\}$
changes MNIST queries by only a few dozen evaluations and keeps PO-128 within
about one log-unit of the default; $\tau_{\mathrm{explore}}=1$ is the clear
under-refinement failure case.
The exploitation cap behaves similarly once it is large enough for early
stopping to dominate: $30$ and $50$ are nearly indistinguishable under this
protocol.
The step size $\eta$ is the most sensitive setting in the grid, where $0.01$ is
too conservative and $0.2$ overshoots, but $0.05$ and $0.1$ remain close.
For $\tau_{\mathrm{thresh}}$, values near $5\times10^{-3}$ look similar,
whereas $5\times10^{-2}$ truncates exploitation early and hurts prompt-opt
regret.
We did not sweep $N_{\mathrm{cand}}$ because Stage~1 scoring is cheap relative
to black-box queries and the default matches TuRBO's candidate budget; see
Section~\ref{app:optimization} for the full protocol.

\section{Kernel Robustness: Mat\'ern vs.\ SE}
\label{app:kernel}

Main results use ARD Mat\'ern-$5/2$. Following
\citet{xu2025standard}, who show that SE kernels are more fragile under
default length-scale initialization in high dimensions, we re-run \method{}
(and, for reference, \mpd{}) with an ARD squared-exponential kernel under
the same MLE fitting protocol, on MNIST and PO-128.

\begin{table}[h]
\centering
\setlength{\tabcolsep}{4pt}
\small
\begin{tabular}{llcc}
\toprule
Method & Kernel & MNIST (\#Q $\downarrow$) & PO-128 ($\ell_T\downarrow$) \\
\midrule
\method{} & Mat\'ern-$5/2$ (default)
  & $\mathbf{216\pm42}$ & $\mathbf{-13.9\pm1.3}$ \\
\method{} & SE / RBF
  & $268\pm58$ & $-12.2\pm1.6$ \\
\mpd{}    & Mat\'ern-$5/2$
  & $827\pm128$ & $-7.1\pm1.2$ \\
\mpd{}    & SE / RBF
  & $961\pm151$ & $-5.9\pm1.5$ \\
\bottomrule
\end{tabular}%

\caption{Kernel robustness: Mat\'ern-$5/2$ vs.\ squared-exponential
(mean $\pm$ std over $10$ runs). Lower is better.}
\label{tab:kernel_robust}
\end{table}

Switching from Mat\'ern-$5/2$ to SE worsens both methods on both metrics,
which is consistent with the high-dimensional SE fragility discussed by
\citet{xu2025standard}.
Under our local protocol the degradation is limited rather than severe:
\method{}'s MNIST query count rises from $216$ to $268$, and PO-128 final
regret worsens by roughly $1.7$ log-units; \mpd{} shifts in the same
direction.
The method ranking is unchanged on these two tasks, so the main-paper
comparison is not overturned by the kernel swap, but we treat this only as a
sanity check on two datasets rather than a general claim that \method{} is
kernel-agnostic.
We keep Mat\'ern-$5/2$ as the default to match the main experiments.

\section{Wall-Clock Timing}
\label{app:wallclock}

We report wall-clock cost on the same hardware as
Section~\ref{app:exp_details} (AMD Ryzen~9~7950X, NVIDIA RTX~4090).
For each method we measure (i)~total runtime per trial and
(ii)~mean per-iteration overhead of surrogate updates and direction /
acquisition optimization, excluding the black-box query itself when that
query is a classifier forward pass or a precomputed prompt score.
Attack timings use MNIST ($d{=}784$); prompt timings use PO-128 ($T{=}200$).
On MNIST, a trial stops at attack success, so total time scales with the
number of queries issued; on PO-128 every method
runs a fixed iteration budget.

\begin{table}[h]
\centering
\setlength{\tabcolsep}{4pt}
\small
\begin{tabular}{lcccc}
\toprule
 & \multicolumn{2}{c}{MNIST} & \multicolumn{2}{c}{PO-128} \\
\cmidrule(lr){2-3}\cmidrule(lr){4-5}
Method
  & Total (s)
  & Opt.\ / iter (s)
  & Total (s)
  & Opt.\ / iter (s) \\
\midrule
\method{}  & $94\pm17$ & $0.41\pm0.07$
           & $88\pm13$ & $0.37\pm0.06$ \\
\nestbo{}  & $1370\pm240$ & $1.91\pm0.24$
           & $421\pm47$ & $1.98\pm0.27$ \\
\mpd{}     & $251\pm41$ & $0.30\pm0.05$
           & $69\pm11$ & $0.27\pm0.05$ \\
\minucb{}  & $420\pm70$ & $0.47\pm0.08$
           & $97\pm14$ & $0.42\pm0.07$ \\
\gibo{}    & $355\pm60$ & $0.36\pm0.06$
           & $81\pm12$ & $0.33\pm0.06$ \\
\turbo{}   & $620\pm105$ & $0.54\pm0.10$
           & $119\pm17$ & $0.51\pm0.09$ \\
\vbo{}     & $1810\pm310$ & $1.14\pm0.17$
           & $251\pm29$ & $1.09\pm0.16$ \\
\dlogei{}  & $1820\pm300$ & $1.32\pm0.20$
           & $289\pm33$ & $1.26\pm0.18$ \\
Random     & $51\pm9$ & ---
           & $7\pm1$ & --- \\
\bottomrule
\end{tabular}%
\caption{Wall-clock timing (mean $\pm$ std over $10$ runs).
``Opt.\ / iter'' is surrogate fit plus acquisition / direction
optimization, excluding black-box queries.
MNIST totals accumulate this overhead over the queries-to-success from the
main text; PO-128 totals use a fixed budget $T{=}200$.
\nestbo{} incurs additional Hessian-related cost; \vbo{} / \dlogei{} pay for
global multi-start acquisition over the full domain.
\method{}'s overhead is dominated by GP refits and candidate scoring of
$\alpha_{\mathrm{ref}}$ / projected gradient ascent for $\mathcal{P}$.}
\label{tab:wallclock}
\end{table}

On a per-iteration basis, \method{} sits with the other first-order local
methods, close to \gibo{} / \mpd{} and well below
\nestbo{}, whose joint gradient--Hessian updates dominate.
Global acquisitions (\vbo{}, \dlogei{}) are slower still per step because
each iteration runs multi-start L-BFGS over the full high-dimensional domain.
On MNIST, total time additionally tracks query count: \method{} finishes in
under two minutes mainly because it succeeds with far fewer queries, while
\vbo{} / \dlogei{} combine a high per-step cost with long runs.
On PO-128 the iteration count is fixed, so totals largely mirror per-iteration
cost; \method{} remains close to \mpd{} / \gibo{} and several times faster than
\nestbo{}.
We do not claim wall-clock as a primary contribution, query efficiency is
the main metric, only that Stage~1/Stage~2 does not make \method{} an
outlier among first-order local baselines on this hardware.

\section{Empirical Validation of Theoretical Claims}
\label{app:emp_theory}

We complement the proofs of Theorems~\ref{thm:refinement} and~\ref{thm:progress_convergence} with two controlled synthetic
checks that isolate each claim.
Both experiments use the smooth quadratic
$f(\bz)=\tfrac12\|\bz\|^2+0.15\,z_0 z_1$ in $d{=}2$ and report
mean $\pm$ standard deviation over $10$ random seeds.

\subsection{Theorem~\ref{thm:refinement}: Monotone Gradient-Uncertainty Contraction}
\label{app:emp_thm1}

Theorem~\ref{thm:refinement} states that each gradient-refinement query weakly decreases the
total gradient uncertainty $\Tr(\bSigma_\bx)$, with a strict decrease whenever
$\nabla_\bx k_\calD(\bx,\bz)\neq\mathbf{0}$.
At a fixed interior point $\bx$, we fit a squared-exponential GP
(lengthscale $0.55$, outputscale $1$, noise variance $10^{-4}$) on three
random initialization points, then run eight greedy refinement steps that
each select
$\bz^\star=\argmax_{\bz}\alpha_{\mathrm{ref}}(\bz)$
from $N_{\mathrm{cand}}=100$ candidates sampled in a local neighborhood of
$\bx$
and append the noisy observation.
Figure~\ref{fig:thm1_uncertainty} plots $\Tr(\bSigma_\bx)$ against the
refinement step.
The trace decreases at every step and shows diminishing returns as the local
gradient posterior concentrates, matching Theorem~\ref{thm:refinement}.

\begin{figure}[h]
  \centering
  \includegraphics[width=0.5\columnwidth]{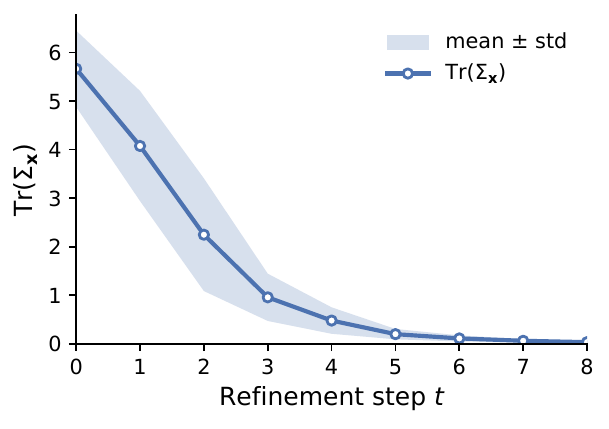}
  \caption{Empirical validation of Theorem~\ref{thm:refinement}.
  Total gradient uncertainty $\Tr(\bSigma_\bx)$ at a fixed interior point
  decreases monotonically under greedy $\alpha_{\mathrm{ref}}$ refinement
  (mean $\pm$ std over $10$ seeds).}
  \label{fig:thm1_uncertainty}
\end{figure}

\subsection{Theorem~\ref{thm:progress_convergence}: Convergence to Steepest Descent}
\label{app:emp_thm2}

Theorem~\ref{thm:progress_convergence} states that if $\Tr(\bSigma_\bx)\to 0$ and
$\bmu_\bx\to\nabla f(\bx)$, then the progress-aware direction converges to
true steepest descent,
$\bv^\star\to\bv_{\mathrm{sd}}:=-\nabla f(\bx)/\|\nabla f(\bx)\|$.
To probe this limit directly, we instantiate the theorem's hypotheses at the
same interior point $\bx$: set the gradient posterior mean to a noisy
perturbation of the analytic $\nabla f(\bx)$ and the covariance to a
shrinking anisotropic matrix $\bSigma_t=\varepsilon_t A_t$ with
$\varepsilon_t\to 0$.
At each step we compute
$\bv^\star=\argmax_{\|\bv\|=1}\mathcal{P}(\bv)$
and record the angle
$\angle(\bv^\star,\bv_{\mathrm{sd}})$.
Figure~\ref{fig:thm2_steepest} shows that this angle decays toward zero as
the posterior concentrates, in agreement with Theorem~\ref{thm:progress_convergence}.

\begin{figure}[h]
  \centering
  \includegraphics[width=0.5\columnwidth]{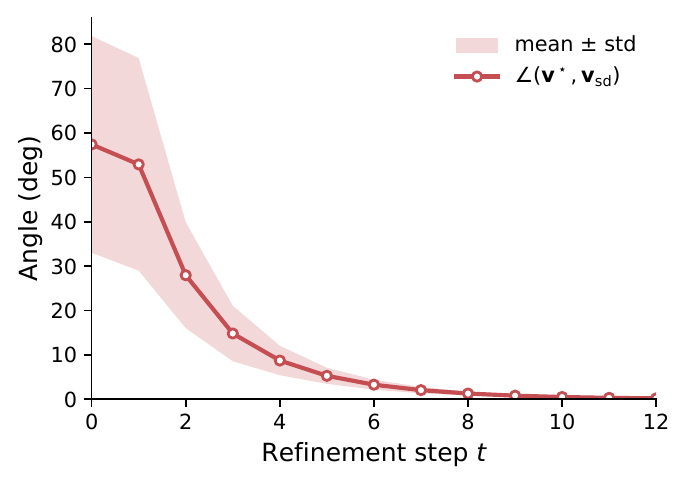}
  \caption{Empirical validation of Theorem~\ref{thm:progress_convergence}.
  Angle between the progress-aware direction $\bv^\star$ and true steepest
  descent $\bv_{\mathrm{sd}}$ decreases toward zero as
  $\Tr(\bSigma_\bx)\to 0$ and $\bmu_\bx\to\nabla f(\bx)$
  (mean $\pm$ std over $10$ seeds).}
  \label{fig:thm2_steepest}
\end{figure}

\section{Full Proof of Theorem~\ref{thm:progress_convergence}}
\label{app:proof_thm2}

We restate the theorem for convenience.

\begin{theorem*}[Progress-aware exploitation approaches steepest descent]
Suppose $\nabla f(\bx)\neq\mathbf{0}$,
$\Tr(\bSigma_\bx)\to 0$, and
$\bmu_\bx\to\nabla f(\bx)$. Then the progress-aware direction converges to
the true steepest descent direction:
\[
\lim_{\Tr(\bSigma_\bx)\to 0}\;\bv^*
\;=\;
-\frac{\nabla f(\bx)}{\|\nabla f(\bx)\|}.
\]
\end{theorem*}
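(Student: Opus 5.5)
The plan is to prove the statement by showing that $\mathcal{P}$ converges \emph{uniformly} on the unit sphere $\mathbb{S}^{d-1}$ to the limit score $\mathcal{P}_\infty(\bv):=[-\bv^\top\nabla f(\bx)]^+$. I then combine this with the fact that $\mathcal{P}_\infty$ has a unique, well-separated maximizer. The argument never uses the asymptotic expansion of the inverse Mills ratio pointwise in $\bv$; instead it rests on a single global inequality for the truncated-normal mean.

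\emph{Step 1: a uniform truncated-normal bound.} Write $s=-\gamma_\bv=-\mu_\bv/\sigma_\bv$. Then Eq.~\eqref{eq:progress} reads $\mathcal{P}(\bv)=\sigma_\bv\, m(s)$ with
\[
m(s):=s+\frac{\phi(s)}{\Phi(s)}=\mathbb{E}[S\mid S>-s]\text{-type quantity}.
\]
Equivalently, $\sigma_\bv m(s)=\mathbb{E}[W\mid W>0]$ for $W\sim\calN(-\mu_\bv,\sigma_\bv^2)$. I will show that
\[
0\le m(s)-s^+\le m(0)=\sqrt{2/\pi}\qquad\text{for all } s\in\mathbb{R}.
\]
The lower bound is immediate: $m(s)>0$ because it is a conditional mean of a positive variable, and $m(s)>s$ because $\phi/\Phi>0$. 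For the upper bound I will use two standard facts about the Mills ratio. First, $m'(s)=1-\mathrm{Var}(S\mid S>-s)\in(0,1)$, which is the classical variance bound for the truncated normal. Hence $m$ is increasing, so $m(s)\le m(0)$ for $s\le0$. Second, $m(s)-s=\phi(s)/\Phi(s)$ is decreasing, so $m(s)-s\le m(0)$ for $s\ge0$. Multiplying by $\sigma_\bv$ yields
\[
\bigl|\mathcal{P}(\bv)-[-\mu_\bv]^+\bigr|\le\sqrt{2/\pi}\,\sigma_\bv\le\sqrt{2/\pi}\,\sqrt{\Tr(\bSigma_\bx)} .
\]
This holds uniformly in $\bv$, because $\sigma_\bv^2=\bv^\top\bSigma_\bx\bv\le\lambda_{\max}(\bSigma_\bx)\le\Tr(\bSigma_\bx)$. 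When $\sigma_\bv=0$ I take the continuous extension $\mathcal{P}(\bv)=[-\mu_\bv]^+$, and the bound holds trivially.

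\emph{Step 2: uniform convergence.} The map $a\mapsto[a]^+$ is $1$-Lipschitz, so $\bigl|[-\bv^\top\bmu_\bx]^+-\mathcal{P}_\infty(\bv)\bigr|\le\|\bmu_\bx-\nabla f(\bx)\|$ for every unit $\bv$. Combining this with Step 1 gives
\[
\sup_{\|\bv\|=1}|\mathcal{P}(\bv)-\mathcal{P}_\infty(\bv)|\le\sqrt{2/\pi}\,\sqrt{\Tr(\bSigma_\bx)}+\|\bmu_\bx-\nabla f(\bx)\|\;\to\;0 .
\]

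\emph{Step 3: argmax stability.} Let $\bv_{\mathrm{sd}}=-\nabla f(\bx)/\|\nabla f(\bx)\|$. Since $\nabla f(\bx)\neq\mathbf{0}$, Cauchy--Schwarz gives that $\mathcal{P}_\infty$ attains its maximum $\|\nabla f(\bx)\|$ only at $\bv_{\mathrm{sd}}$. Fix $\epsilon>0$. By continuity of $\mathcal{P}_\infty$ and compactness of $\{\bv\in\mathbb{S}^{d-1}:\|\bv-\bv_{\mathrm{sd}}\|\ge\epsilon\}$, there is a gap
\[
\delta_\epsilon:=\|\nabla f(\bx)\|-\max_{\|\bv-\bv_{\mathrm{sd}}\|\ge\epsilon}\mathcal{P}_\infty(\bv)>0 .
\]
Once the uniform error from Step 2 is below $\delta_\epsilon/2$, every $\bv$ with $\|\bv-\bv_{\mathrm{sd}}\|\ge\epsilon$ satisfies $\mathcal{P}(\bv)<\mathcal{P}(\bv_{\mathrm{sd}})$. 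Therefore any maximizer $\bv^*$ of Eq.~\eqref{eq:max_progress} lies within $\epsilon$ of $\bv_{\mathrm{sd}}$. This is the elementary content of Berge's theorem, and it holds for any selection from the argmax set, so uniqueness of $\bv^*$ is not needed along the sequence.

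I expect the main obstacle to be Step 1. The pointwise picture ($\gamma_\bv\to\pm\infty$) breaks down near the great circle $\bv^\top\nabla f(\bx)=0$, where $\gamma_\bv$ need not diverge, so a uniform-in-$s$ control of $m(s)-s^+$ is essential. The delicate part is justifying $m'(s)\in(0,1)$, i.e.\ that the truncated-normal variance is below $1$. I would first try to cite this fact directly (it is equivalent to log-concavity of $\Phi$). Failing that, I would fall back on the cruder bound $m(s)-s^+\le\mathbb{E}|S|$-type estimates, which give a constant $O(1)$ and still suffice.
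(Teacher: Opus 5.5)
Your proof is correct. It has the same architecture as the paper's proof: uniform convergence of $\mathcal{P}$ to $[-\bv^\top\nabla f(\bx)]^+$ on $\mathbb{S}^{d-1}$, then argmax stability from the unique limit maximizer (the content of Berge's theorem). The key lemma, however, is genuinely different.

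The paper gets the limit pointwise through three cases ($c_\bv<0$, $c_\bv>0$, $c_\bv=0$), using the expansion $R(\gamma)=\gamma+\gamma^{-1}+\mathcal{O}(\gamma^{-3})$ on ascent directions. It then asserts uniformity by a joint-Lipschitz argument that rests on a claimed bound $\Phi(-\gamma_\bv)\ge\delta>0$ uniformly over the sphere.

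Your single inequality $0\le\mathcal{P}(\bv)-[-\mu_\bv]^+\le\sqrt{2/\pi}\,\sigma_\bv$ replaces both steps at once. It needs no case split and is uniform by construction, including on the great circle $\bv^\top\nabla f(\bx)=0$, where the pointwise asymptotics are non-uniform. It also covers $\sigma_\bv=0$ and gives the explicit rate $\Delta:=\sqrt{2/\pi}\,\sqrt{\Tr(\bSigma_\bx)}+\|\bmu_\bx-\nabla f(\bx)\|$.

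This is a real gain in rigor, not just style:
\begin{itemize}
\item The paper's Case~3 bound $J(\bv)\le\sigma_\bv R(0)$ holds only when $\mu_\bv\ge 0$. Otherwise the missing $[-\mu_\bv]^+$ term is exactly what your bound supplies.
\item $\Phi(-\gamma_\bv)\to 0$ on the entire ascent hemisphere, so the paper's stated justification of uniformity does not cover the sphere.
\end{itemize}
Your route also gives a quantitative statement. Since $\mathcal{P}_\infty(\bv_{\mathrm{sd}})-\mathcal{P}_\infty(\bv)\ge\frac14\|\nabla f(\bx)\|\,\|\bv-\bv_{\mathrm{sd}}\|^2$ on the sphere, any maximizer satisfies $\|\bv^*-\bv_{\mathrm{sd}}\|^2\le 8\Delta/\|\nabla f(\bx)\|$. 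What the paper's case analysis offers in return is a regime-by-regime picture, for example the $\sigma_\bv^2/\mu_\bv$ decay of the score along ascent directions.

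One small slip. With $h(s):=\phi(s)/\Phi(s)$ one has $h'(s)=-h(s)\,m(s)$, so $m'(s)=1-h(s)m(s)=\mathrm{Var}(S\mid S>-s)$, not $1-\mathrm{Var}(S\mid S>-s)$. The latter is the derivative of $\mathbb{E}[S\mid S>a]$ with respect to the truncation point $a$. Your conclusion $m'\in(0,1)$ is unaffected. The step you flag as delicate is in fact immediate: $m'>0$ is positivity of a variance, and $1-m'=h\,m>0$ because $h,m>0$. You need neither log-concavity nor the fallback estimate.
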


\begin{proof}
As $\Tr(\bSigma_\bx)\to 0$, we have $\bSigma_\bx\to\mathbf{0}$ in spectral
norm. Together with the assumption $\bmu_\bx\to\nabla f(\bx)$, the directional
moments converge for every unit $\bv$:
$\mu_\bv\to c_\bv:=\bv^\top\nabla f(\bx)$ and $\sigma_\bv\to 0$.

\textbf{Step 1: Pointwise convergence of $J(\bv)$.}
Fix a unit vector $\bv$. Write
$J(\bv)=\mathcal{P}(\bv)=\sigma_\bv\bigl[R(\gamma_\bv)-\gamma_\bv\bigr]$.

\textit{Case 1}: $c_\bv < 0$.
Then $\gamma_\bv=\mu_\bv/\sigma_\bv\to-\infty$.
Since $\phi(\gamma)\to 0$ and $\Phi(-\gamma)\to 1$ as $\gamma\to-\infty$,
we have $R(\gamma_\bv)\to 0$.
Therefore
\[
J(\bv)=\sigma_\bv R(\gamma_\bv)-\mu_\bv
\to 0-c_\bv=-c_\bv>0.
\]

\textit{Case 2}: $c_\bv > 0$.
Then $\gamma_\bv\to+\infty$.
Using the asymptotic expansion
$R(\gamma)=\gamma+\gamma^{-1}+\mathcal{O}(\gamma^{-3})$
for large $\gamma$ \cite{grimmett2020probability}:
\[
J(\bv)=\sigma_\bv\bigl[\gamma_\bv^{-1}+\mathcal{O}(\gamma_\bv^{-3})\bigr]
=\frac{\sigma_\bv^2}{\mu_\bv}+\mathcal{O}\!\left(\frac{\sigma_\bv^4}{\mu_\bv^3}\right)
\to 0.
\]

\textit{Case 3}: $c_\bv = 0$.
We have $J(\bv)\leq\sigma_\bv R(0)=\sigma_\bv\sqrt{2/\pi}\to 0$.

Combining all three cases,
\begin{align*}
J(\bv)
&\;\xrightarrow{\Tr(\bSigma_\bx)\to 0}\;
[-c_\bv]^+ \\
&\qquad=
[-\bv^\top\nabla f(\bx)]^+
\quad\text{pointwise on }\mathbb{S}^{d-1}.
\end{align*}

\textbf{Step 2: Uniform convergence.}
$J(\bv)$ is jointly Lipschitz in $\bv$ and in $(\bmu_\bx,\bSigma_\bx)$,
since $R$ is smooth on any compact set away from $\Phi(-\gamma)=0$, and
$\Phi(-\gamma_\bv)\geq\delta>0$ uniformly over the sphere once $\sigma_\bv$
is small enough relative to $\min_\bv|c_\bv|$ (away from the boundary of the
descent hemisphere).
Because $(\bmu_\bx,\bSigma_\bx)$ converges under the stated assumptions on
compact $\calX$, the convergence of $J(\bv)$ is uniform over
$\mathbb{S}^{d-1}$.

\textbf{Step 3: Convergence of the argmax.}
The limit function $g(\bv):=[-\bv^\top\nabla f(\bx)]^+$ is continuous on the
compact sphere $\mathbb{S}^{d-1}$.
By uniform convergence of $J(\bv)\to g(\bv)$ and the Berge Maximum Theorem
\cite{berge1963topological}, the argmax correspondence is upper hemicontinuous:
any cluster point of $\{\bv^*\}$ lies in $\argmax_{\bv}g(\bv)$.

\textbf{Step 4: Uniqueness of the argmax of $g$.}
Since $\nabla f(\bx)\neq\mathbf{0}$, the function $g(\bv)=-\bv^\top\nabla f(\bx)$
is strictly positive on the open descent hemisphere
$\{\bv:\bv^\top\nabla f(\bx)<0\}$ and zero elsewhere.
It is uniquely maximized at
$\bv^*=-\nabla f(\bx)/\|\nabla f(\bx)\|$,
because $g(\bv)=-\bv^\top\nabla f(\bx)\leq\|\nabla f(\bx)\|$ with equality
iff $\bv=-\nabla f(\bx)/\|\nabla f(\bx)\|$.
Therefore the argmax is a singleton and
$\bv^*\to-\nabla f(\bx)/\|\nabla f(\bx)\|$.
\end{proof}

\section{Explicit Kernel Derivatives for the Gradient Posterior}
\label{app:kernel_deriv}

The main text writes the gradient posterior mean and covariance using the
compact operators $\nabla k$ and $k\nabla^\top$. This section derives the
corresponding explicit componentwise definitions.

Let $k$ be a twice-differentiable kernel on $\calX\subseteq\mathbb{R}^d$,
and write $\bx=(x_1,\ldots,x_d)^\top$ and
$\bx'=(x'_1,\ldots,x'_d)^\top$. A differential operator placed
\emph{before} $k$ acts on its first argument, while an operator placed
\emph{after} $k$ acts on its second:
\begin{align}
\label{eq:kernel_grad_left}
\bigl[\nabla k(\bx,\bx')\bigr]_i
&=
\frac{\partial}{\partial x_i}\,k(\bx,\bx'),
\qquad i=1,\ldots,d,\\
\label{eq:kernel_grad_right}
\bigl[k(\bx,\bx')\nabla^\top\bigr]_j
&=
\frac{\partial}{\partial x'_j}\,k(\bx,\bx'),
\qquad j=1,\ldots,d.
\end{align}
Thus $\nabla k(\bx,\bx')\in\mathbb{R}^{d}$ is a column vector and
$k(\bx,\bx')\nabla^\top\in\mathbb{R}^{1\times d}$ is a row vector.
The mixed second-derivative matrix appearing in the prior gradient
covariance is
\begin{equation}
\label{eq:kernel_hess_mixed}
\bigl[\nabla k(\bx,\bx')\nabla^\top\bigr]_{i,j}
=
\frac{\partial^2}{\partial x_i\,\partial x'_j}\,k(\bx,\bx'),
\qquad
i,j=1,\ldots,d,
\end{equation}
so $\nabla k(\bx,\bx)\nabla^\top\in\mathbb{R}^{d\times d}$.

When the second (resp.\ first) argument is a training set
$\bX=(\bx_1,\ldots,\bx_n)^\top$, the same convention yields matrices.
Writing $k(\bx,\bX)\in\mathbb{R}^{1\times n}$ for the row vector of
covariances and $k(\bX,\bx)\in\mathbb{R}^{n}$ for the corresponding column
vector,
\begin{align}
\label{eq:kernel_grad_train}
\bigl[\nabla k(\bx,\bX)\bigr]_{i,\ell}
&=
\frac{\partial}{\partial x_i}\,k(\bx,\bx_\ell),
\qquad
\nabla k(\bx,\bX)\in\mathbb{R}^{d\times n},\\
\label{eq:kernel_grad_train_right}
\bigl[k(\bX,\bx)\nabla^\top\bigr]_{\ell,j}
&=
\frac{\partial}{\partial x_j}\,k(\bx_\ell,\bx),
\qquad
k(\bX,\bx)\nabla^\top\in\mathbb{R}^{n\times d}.
\end{align}
By symmetry of $k$,
$k(\bX,\bx)\nabla^\top=\bigl(\nabla k(\bx,\bX)\bigr)^\top$.
Substituting Eqs.~\eqref{eq:kernel_grad_left}--\eqref{eq:kernel_grad_train_right}
into the conditioning formulas of the main text recovers the usual
joint GP over function values and gradients
\citep{williams2006gaussian}:
\begin{align}
\bmu_\bx
&=
\nabla\mu(\bx)+\nabla k(\bx,\bX)\,
\mathcal{K}^{-1}\bigl(\by-\mu(\bX)\bigr),\\
\bSigma_\bx
&=
\nabla k(\bx,\bx)\nabla^\top
-\nabla k(\bx,\bX)\,
\mathcal{K}^{-1}\,
k(\bX,\bx)\nabla^\top,
\end{align}
with $\mathcal{K}=k(\bX,\bX)+\sigma^2\bI$.

\section{Proof of Proposition~\ref{prop:refinement_closed} (Closed-Form Gradient Refinement)}
\label{app:proof_refinement}

We restate the claim for convenience. Recall that the gradient-refinement
acquisition of the main text is the expected one-step reduction in total
gradient variance at the current iterate $\bx$,
\begin{equation}
\label{eq:refinement_def_supp}
\alpha_{\mathrm{ref}}(\bz)
=
\mathbb{E}_{y_\bz}\!\Bigl[
  \Tr(\bSigma_\bx)
  -
  \Tr\!\bigl(\bSigma_{\bx|\calD\cup(\bz,y_\bz)}\bigr)
\Bigr],
\end{equation}
where $y_\bz=f(\bz)+\varepsilon$ with $\varepsilon\sim\calN(0,\sigma^2)$ is
the yet-unobserved noisy evaluation at a candidate $\bz\in\calX$, and
$\bSigma_{\bx|\calD\cup(\bz,y_\bz)}$ denotes the gradient covariance at
$\bx$ after augmenting $\calD$ with $(\bz,y_\bz)$.

\begin{proposition*}[Closed-form gradient refinement]
Let $k_\calD(\cdot,\cdot)$ denote the GP posterior covariance after observing
$\calD$. Then $\alpha_{\mathrm{ref}}(\bz)$ is independent of the unobserved
value $y_\bz$ and equals
\begin{equation}
\label{eq:refinement_closed_supp}
\alpha_{\mathrm{ref}}(\bz)
=
\frac{\|\nabla_\bx k_\calD(\bx,\bz)\|^2}{k_\calD(\bz,\bz)+\sigma^2},
\end{equation}
where $\nabla_\bx k_\calD(\bx,\bz)=\partial k_\calD(\bx,\bz)/\partial\bx$.
\end{proposition*}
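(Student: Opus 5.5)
Our approach is to condition sequentially: the posterior after observing $\calD\cup(\bz,y_\bz)$ equals the posterior obtained by treating the GP conditioned on $\calD$ as a new prior and conditioning it on the single noisy observation $y_\bz$. Under the posterior process, the pair $(\nabla f(\bx), y_\bz)$ is jointly Gaussian. This holds because the gradient is a linear functional of $f$, and the derivative operations commute with conditioning on $\calD$; the kernel is differentiable as required by Assumption~\ref{ass:kernel}. The joint distribution has blocks $\mathrm{Cov}(\nabla f(\bx))=\bSigma_\bx$, $\mathrm{Var}(y_\bz)=k_\calD(\bz,\bz)+\sigma^2$, and cross-covariance $\mathrm{Cov}(\nabla f(\bx),y_\bz)=\nabla_\bx k_\calD(\bx,\bz)\in\mathbb{R}^d$.

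To justify the cross-covariance, we differentiate the posterior kernel $k_\calD(\bx,\bz)=k(\bx,\bz)-k(\bx,\bX)\mathcal{K}^{-1}k(\bX,\bz)$ in its first argument. This gives $\nabla k(\bx,\bz)-\nabla k(\bx,\bX)\mathcal{K}^{-1}k(\bX,\bz)$, which matches the conditional covariance between the gradient and $f(\bz)$ obtained from the explicit formulas in Appendix~\ref{app:kernel_deriv}. The noise $\varepsilon$ is independent of everything else, so it contributes only to the variance of $y_\bz$.

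Next we apply the Gaussian conditioning (Schur complement) formula:
\[
\bSigma_{\bx|\calD\cup(\bz,y_\bz)} = \bSigma_\bx - \frac{\nabla_\bx k_\calD(\bx,\bz)\,\nabla_\bx k_\calD(\bx,\bz)^\top}{k_\calD(\bz,\bz)+\sigma^2}.
\]
The right-hand side contains no $y_\bz$, since Gaussian conditional covariances do not depend on the observed value. Hence the expectation over $y_\bz$ in Eq.~\eqref{eq:refinement_def_supp} is taken of a deterministic quantity and collapses. Taking traces and using $\Tr(\mathbf{a}\mathbf{a}^\top)=\|\mathbf{a}\|^2$ gives Eq.~\eqref{eq:refinement_closed_supp}.

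The main step requiring care is justifying sequential conditioning at the level of derivatives. Equivalently, one must show that the gradient covariance computed from the augmented data via the formula for $\bSigma_\bx$ agrees with the rank-one update above. I would verify this directly with the block matrix inversion formula for $\mathcal{K}$ augmented by one row and column. The Schur complement of the new block is $k_\calD(\bz,\bz)+\sigma^2$. Expanding $\nabla k(\bx,[\bX,\bz])\,\mathcal{K}_{\mathrm{aug}}^{-1}\,k([\bX,\bz],\bx)\nabla^\top$ reproduces $\bSigma_\bx$ minus the rank-one term. One also needs $k_\calD(\bz,\bz)+\sigma^2>0$. This holds when $\sigma^2>0$. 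When $\sigma^2=0$ it holds whenever $\bz\notin\bX$, by positive definiteness of $k$. In the degenerate case $k_\calD(\bz,\bz)=0$, the observation is uninformative, and we interpret $\alpha_{\mathrm{ref}}(\bz)=0$; this is consistent because the numerator then also vanishes by Cauchy--Schwarz on the posterior covariance.
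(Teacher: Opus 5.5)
Your proposal is correct and follows essentially the same route as the paper: both arrive at the rank-one downdate $\bSigma_{\bx|\calD\cup(\bz,y_\bz)}=\bSigma_\bx-\mathbf{g}_\bz\mathbf{g}_\bz^\top/(k_\calD(\bz,\bz)+\sigma^2)$ with $\mathbf{g}_\bz=\nabla_\bx k_\calD(\bx,\bz)$, note that it does not involve $y_\bz$, and then take traces. The only difference is in how the downdate is obtained: the paper applies the rank-one update to the posterior kernel and then takes the mixed derivative, while you condition the jointly Gaussian pair $(\nabla f(\bx),y_\bz)$ directly via a Schur complement; your block-inversion check and your handling of a vanishing denominator are extra care that the paper omits.
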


\begin{proof}
Write $k_\calD$ for the posterior covariance after conditioning on $\calD$.
The gradient posterior covariance at $\bx$ is the mixed Hessian of this
kernel,
\begin{equation}
\label{eq:Sigma_from_kernel}
\bSigma_\bx
=
\nabla_\bx\nabla_{\bx'}^\top k_\calD(\bx,\bx')\Big|_{\bx'=\bx}.
\end{equation}
Now augment $\calD$ by a single noisy observation at $\bz$. The standard
rank-one GP covariance update
\citep{williams2006gaussian} gives the updated
posterior covariance
\begin{equation}
\label{eq:rankone_kernel}
k_{\calD'}(\bx,\bx')
=
k_\calD(\bx,\bx')
-
\frac{k_\calD(\bx,\bz)\,k_\calD(\bz,\bx')}{k_\calD(\bz,\bz)+\sigma^2},
\end{equation}
which depends on the location $\bz$ but not on the realized value $y_\bz$.
Applying Eq.~\eqref{eq:Sigma_from_kernel} to $k_{\calD'}$ therefore yields
\begin{align}
&\bSigma_{\bx|\calD\cup(\bz,y_\bz)}
\notag\\
&\quad=
\nabla_\bx\nabla_{\bx'}^\top k_{\calD'}(\bx,\bx')\Big|_{\bx'=\bx}
\notag\\
&\quad=
\bSigma_\bx
-
\frac{
  \bigl(\nabla_\bx k_\calD(\bx,\bz)\bigr)
  \bigl(\nabla_{\bx'} k_\calD(\bz,\bx')\bigr)^\top
  \big|_{\bx'=\bx}
}{k_\calD(\bz,\bz)+\sigma^2}.
\label{eq:Sigma_update_supp}
\end{align}
By symmetry of $k_\calD$,
$\nabla_{\bx'} k_\calD(\bz,\bx')\big|_{\bx'=\bx}
=\nabla_\bx k_\calD(\bx,\bz)$.
Writing
$\mathbf{g}_\bz:=\nabla_\bx k_\calD(\bx,\bz)\in\mathbb{R}^d$ for this
common vector, Eq.~\eqref{eq:Sigma_update_supp} simplifies to the rank-one
downdate
\begin{equation}
\label{eq:Sigma_downdate}
\bSigma_{\bx|\calD\cup(\bz,y_\bz)}
=
\bSigma_\bx
-
\frac{\mathbf{g}_\bz\mathbf{g}_\bz^\top}{k_\calD(\bz,\bz)+\sigma^2}.
\end{equation}
In particular, the right-hand side does not involve $y_\bz$, so the
expectation in Eq.~\eqref{eq:refinement_def_supp} is vacuous:
\begin{equation}
\alpha_{\mathrm{ref}}(\bz)
=
\Tr(\bSigma_\bx)
-
\Tr\!\bigl(\bSigma_{\bx|\calD\cup(\bz,y_\bz)}\bigr).
\end{equation}
Substituting Eq.~\eqref{eq:Sigma_downdate} and using
$\Tr(\mathbf{g}_\bz\mathbf{g}_\bz^\top)=\|\mathbf{g}_\bz\|^2$ gives
\begin{equation}
\alpha_{\mathrm{ref}}(\bz)
=
\frac{\|\mathbf{g}_\bz\|^2}{k_\calD(\bz,\bz)+\sigma^2}
=
\frac{\|\nabla_\bx k_\calD(\bx,\bz)\|^2}{k_\calD(\bz,\bz)+\sigma^2},
\end{equation}
which is Eq.~\eqref{eq:refinement_closed_supp}.
\end{proof}

The closed form makes the information-theoretic content of
$\alpha_{\mathrm{ref}}$ explicit: the numerator rewards candidates whose
posterior cross-covariance gradient with $\bx$ is large (so observing
$f(\bz)$ is informative about $\nabla f(\bx)$), while the denominator
down-weights locations that are already well determined by $\calD$.

\section{Connections to Related Acquisition Criteria}
\label{app:connections}

This section expands the relation between the progress-aware score
$\mathcal{P}(\bv)$ and three closely related quantities: the most-probable
descent criterion of \mpd \citep{nguyen2022local}, the unconditional expected
descent of a directional derivative, and the classical Probability of
Improvement (PI) / Expected Improvement (EI) pair
\citep{mockus1978application,jones1998efficient}. The goal is to make
precise which factor each criterion maximizes, and why the resulting
directions generally disagree.

\subsection{Setup and Notation}
\label{app:connections_setup}

Recall that under a twice-differentiable GP prior, the directional
derivative along a unit vector $\bv$ is univariate Gaussian,
\begin{equation}
\nabla_\bv f(\bx)
\;=\;
\bv^\top\nabla f(\bx)
\;\sim\;
\calN(\mu_\bv,\sigma_\bv^2),
\end{equation}
with $\mu_\bv=\bv^\top\bmu_\bx$,
$\sigma_\bv=\sqrt{\bv^\top\bSigma_\bx\bv}$, and standardized mean
$\gamma_\bv=\mu_\bv/\sigma_\bv$. Write $\phi$ and $\Phi$ for the standard
normal PDF and CDF, and define the inverse Mills ratio
\begin{equation}
R(\gamma)
\;:=\;
\frac{\phi(\gamma)}{\Phi(-\gamma)}.
\end{equation}
Throughout, $(a)_+=\max(a,0)$.

\subsection{Derivation of the Progress-Aware Score}
\label{app:connections_progress}

For $Z\sim\calN(\mu,\sigma^2)$, the mean of the lower-truncated law at
zero is the standard truncated-Gaussian identity
\begin{equation}
\label{eq:trunc_mean}
\mathbb{E}[Z\mid Z<0]
=
\mu - \sigma\,\frac{\phi(\mu/\sigma)}{\Phi(-\mu/\sigma)}
=
\mu - \sigma\,R(\mu/\sigma).
\end{equation}
Taking $Z=\nabla_\bv f(\bx)$ and negating both sides yields
\begin{align}
\mathcal{P}(\bv)
&:=
\mathbb{E}\bigl[-\nabla_\bv f(\bx)\mid \nabla_\bv f(\bx)<0\bigr]
\notag\\
&=
-\mu_\bv + \sigma_\bv\,R(\gamma_\bv)
=
\sigma_\bv\bigl[R(\gamma_\bv)-\gamma_\bv\bigr],
\label{eq:progress_supp}
\end{align}
which is Definition~\ref{def:progress}. Two elementary properties follow
immediately:
\begin{enumerate}[leftmargin=*,itemsep=2pt]
\item $\mathcal{P}(\bv)>0$ whenever $\sigma_\bv>0$, because the Mills ratio
satisfies $R(\gamma)>\gamma$ for all finite $\gamma$ when the conditioning
event has positive probability under a nondegenerate Gaussian.
\item $\mathcal{P}(\bv)$ is homogeneous of degree one in the directional
scale $(\mu_\bv,\sigma_\bv)$: if
$(\mu_\bv,\sigma_\bv)\mapsto(c\mu_\bv,c\sigma_\bv)$ for $c>0$, then
$\mathcal{P}$ scales by $c$. Thus the score measures a \emph{magnitude} of
conditional decrease, not a dimensionless probability.
\end{enumerate}

\subsection{Decomposition into Probability and Conditional Magnitude}
\label{app:connections_decomp}

The unconditional expected descent of a directional derivative is the
first truncated moment without conditioning:
\begin{equation}
\label{eq:uncond_descent}
\begin{aligned}
&\mathbb{E}\!\left[(-\nabla_\bv f(\bx))_+\right] \\
&\qquad=
\mathbb{E}\bigl[-\nabla_\bv f(\bx)\cdot
\mathbf{1}_{\{\nabla_\bv f(\bx)<0\}}\bigr].
\end{aligned}
\end{equation}
By the law of total expectation, we obtain
\begin{align}
&\mathbb{E}\!\left[(-\nabla_\bv f(\bx))_+\right]
\notag\\
&\qquad=
\mathbb{P}(\nabla_\bv f(\bx)<0)\,
\mathbb{E}\bigl[-\nabla_\bv f(\bx)\mid \nabla_\bv f(\bx)<0\bigr]
\notag\\
&\qquad=
\Phi(-\gamma_\bv)\,\mathcal{P}(\bv).
\label{eq:factorization}
\end{align}
This factorization is the key structural observation. It separates three
natural criteria on the same Gaussian directional posterior:
\mpd maximizes the descent probability $\Phi(-\gamma_\bv)$;
progress-aware exploitation maximizes the conditional magnitude
$\mathcal{P}(\bv)$; and unconditional expected descent maximizes their
product $\Phi(-\gamma_\bv)\,\mathcal{P}(\bv)$.

Because the first factor is a probability in $[0,1]$ and the second is a
scale-dependent magnitude, maximizing either factor alone is not equivalent
to maximizing their product. In particular:
\begin{itemize}[leftmargin=*,itemsep=2pt]
\item \mpd is invariant to positive rescaling of $(\mu_\bv,\sigma_\bv)$ and
therefore ignores how large a decrease would be if descent occurs.
\item Progress-aware exploitation is deliberately \emph{not}
probability-weighted: a direction with modest descent probability can still
win if its negative tail is heavy enough that
$\mathcal{P}(\bv)$ is large.
\item Unconditional expected descent recovers a probability-weighted
magnitude, analogous in spirit to EI (see below), but still acts on
directional derivatives rather than on objective values.
\end{itemize}

\subsection{Worked Example: Scale Blindness and Ranking Disagreement}
\label{app:connections_example}

We first make the scale-invariance of descent probability concrete.
Consider two unit directions with identical standardized means
$\gamma_{\mathbf{u}}=\gamma_{\mathbf{w}}=-5$,
\begin{equation}
\begin{aligned}
\nabla_{\mathbf{u}}f(\bx)&\sim\calN(-0.01,0.002^2),\\
\nabla_{\mathbf{w}}f(\bx)&\sim\calN(-5,1).
\end{aligned}
\end{equation}
Both receive the same descent probability $\Phi(5)\approx 1$, so \mpd is
indifferent, yet their conditional magnitudes differ by orders of magnitude:
$\mathcal{P}(\mathbf{u})\approx 0.01$ while
$\mathcal{P}(\mathbf{w})=\bigl[R(-5)+5\bigr]\approx 5$.
This is exactly the scale discarded by $\Phi(-\gamma_\bv)$.

The three criteria can also disagree in ranking. Take
\begin{equation}
\begin{aligned}
\nabla_{\bv_1}f(\bx)&\sim\calN(-0.01,0.002^2),\\
\nabla_{\bv_2}f(\bx)&\sim\calN(0.5,1).
\end{aligned}
\end{equation}
For $\bv_1$, $\gamma_1=-5$, so
$\Phi(-\gamma_1)=\Phi(5)\approx 1$ and
\begin{equation}
\mathcal{P}(\bv_1)
=
0.002\bigl[R(-5)+5\bigr]
\approx
0.002\cdot 5
=
0.01,
\end{equation}
since $R(-5)=\phi(-5)/\Phi(5)$ is negligible. Thus \mpd strongly prefers
$\bv_1$, while the conditional magnitude is only about $0.01$.

For $\bv_2$, $\gamma_2=0.5$, so
$\Phi(-\gamma_2)=\Phi(-0.5)\approx 0.309$ and
\begin{equation}
\mathcal{P}(\bv_2)
=
1\cdot\bigl[R(0.5)-0.5\bigr]
=
\frac{\phi(0.5)}{\Phi(-0.5)}-0.5
\approx
0.64.
\end{equation}
Progress-aware exploitation therefore prefers $\bv_2$. The unconditional
expected descents are
\begin{align}
\mathbb{E}[(-\nabla_{\bv_1}f)_+]
&\approx
1\cdot 0.01
=
0.01,
\\
\mathbb{E}[(-\nabla_{\bv_2}f)_+]
&\approx
0.309\cdot 0.64
\approx
0.20,
\end{align}
so the product criterion also prefers $\bv_2$, but by a smaller margin than
$\mathcal{P}$ alone. Probability-only ranking is the most conservative of the
three: it is blind to slope scale when $\gamma$ is fixed, and can prefer a
nearly certain but tiny direction over an uncertain but large-tailed one.

\subsection{Analogy with Probability of Improvement and Expected Improvement}
\label{app:connections_ei}

Classical BO acquisitions act on the \emph{objective-value} posterior
$f(\bz)\sim\calN(\mu(\bz),\sigma(\bz)^2)$ at a candidate $\bz$, relative to
the best observed value $f_*$. With $\gamma_{\mathrm{imp}}(\bz)=
(f_*-\mu(\bz))/\sigma(\bz)$ for minimization (or the sign-flipped analogue
for maximization),
\begin{align}
\mathrm{PI}(\bz)
&=
\Phi\bigl(\gamma_{\mathrm{imp}}(\bz)\bigr),
\label{eq:pi}\\
\mathrm{EI}(\bz)
&=
\sigma(\bz)\,
\bigl[
\gamma_{\mathrm{imp}}(\bz)\,\Phi\bigl(\gamma_{\mathrm{imp}}(\bz)\bigr)
+
\phi\bigl(\gamma_{\mathrm{imp}}(\bz)\bigr)
\bigr]
\label{eq:ei}\\
&=
\mathbb{P}(f(\bz)<f_*)
\notag\\
&\qquad\cdot
\mathbb{E}\bigl[f_*-f(\bz)\mid f(\bz)<f_*\bigr].
\notag
\end{align}
Eq.~\eqref{eq:ei} is again an unconditional truncated-Gaussian moment:
EI factors as ``probability of improvement $\times$ conditional magnitude of
improvement,'' exactly as
Eq.~\eqref{eq:factorization} factors unconditional expected descent
\citep{jones1998efficient,mockus1978application}.
The directional analogues are therefore PI~$\leftrightarrow$~\mpd
(probability only), conditional improvement~$\leftrightarrow$~$\mathcal{P}(\bv)$
(magnitude given the event), and
EI~$\leftrightarrow$~unconditional expected descent
(probability $\times$ magnitude).

Progress-aware exploitation is thus closer to the \emph{conditional}
factor of EI than to EI itself: it deliberately discards the
probability weight $\Phi(-\gamma_\bv)$ and retains only the conditional
magnitude. 
Two distinctions remain important:
\begin{enumerate}[leftmargin=*,itemsep=2pt]
\item \textbf{Domain of the random variable.}
PI/EI truncate the posterior of $f(\bz)$ about a baseline $f_*$;
$\mathcal{P}$ truncates the posterior of a \emph{directional derivative}
about zero. The shared algebra is truncated-Gaussian, but the optimized
quantity is different.
\item \textbf{Role in the algorithm.}
PI/EI are typically used as global acquisition functions over $\calX$.
Progress-aware exploitation is a local direction-selection criterion
applied after gradient refinement has already concentrated
$(\bmu_\bx,\bSigma_\bx)$. Theorem~\ref{thm:progress_convergence} then shows that, as
$\Tr(\bSigma_\bx)\to 0$ with consistent mean, $\mathcal{P}$ recovers
normalized steepest descent, a limit that has no direct PI/EI analogue.
\end{enumerate}

In short, \mpd stands to progress-aware exploitation as PI stands to the
conditional half of EI, while unconditional expected descent stands to
$\mathcal{P}$ as EI stands to that same conditional half. \method
optimizes the conditional magnitude on directional derivatives, preceded by
closed-form gradient refinement.

\section{Gradient of the Progress-Aware Score}
\label{app:grad_progress}

To solve
\begin{equation}
\label{eq:max_progress_supp}
\bv^* = \argmax_{\|\bv\|=1} \mathcal{P}(\bv)
\end{equation}
via projected gradient ascent \citep{calamai1987projected}, we derive the analytic gradient of
$J(\bv)=\mathcal{P}(\bv)=\sigma_\bv\bigl[R(\gamma_\bv)-\gamma_\bv\bigr]$
with respect to $\bv$, where $R(\gamma):=\phi(\gamma)/\Phi(-\gamma)$ is the
inverse Mills ratio and $\gamma_\bv=\mu_\bv/\sigma_\bv$.

Recall $\mu_\bv=\bv^\top\bmu_\bx$,
$\sigma_\bv=\sqrt{\bv^\top\bSigma_\bx\bv}$,
and $\gamma_\bv=\mu_\bv/\sigma_\bv$. Their partial derivatives are
\begin{align}
\frac{\partial\sigma_\bv}{\partial\bv}
  &= \frac{\bSigma_\bx\bv}{\sigma_\bv}, \\
\frac{\partial\gamma_\bv}{\partial\bv}
  &= \frac{\bmu_\bx\sigma_\bv - \mu_\bv(\bSigma_\bx\bv/\sigma_\bv)}{\sigma_\bv^2}
   = \frac{\sigma_\bv\bmu_\bx - \gamma_\bv\bSigma_\bx\bv}{\sigma_\bv^2}.
\end{align}

Using the identity $R'(\gamma)=-R(\gamma)\bigl[\gamma+R(\gamma)\bigr]$
\cite{grimmett2020probability}, the chain rule gives
\begin{align}
\nabla_\bv J(\bv)
&= \frac{\partial\sigma_\bv}{\partial\bv}\bigl[R(\gamma_\bv)-\gamma_\bv\bigr]
 + \sigma_\bv\frac{\partial}{\partial\bv}\bigl[R(\gamma_\bv)-\gamma_\bv\bigr]
\notag\\
&= \frac{\bSigma_\bx\bv}{\sigma_\bv}\bigl[R(\gamma_\bv)-\gamma_\bv\bigr]
\notag\\
&\quad + \sigma_\bv
  \Bigl\{R'(\gamma_\bv)-1\Bigr\}
  \frac{\sigma_\bv\bmu_\bx-\gamma_\bv\bSigma_\bx\bv}{\sigma_\bv^2}
\notag\\
&= \frac{\bSigma_\bx\bv}{\sigma_\bv}\bigl[R(\gamma_\bv)-\gamma_\bv\bigr]
\notag\\
&\quad - \frac{\sigma_\bv\bmu_\bx-\gamma_\bv\bSigma_\bx\bv}{\sigma_\bv^2}
\notag\\
&\qquad\times
  \Bigl\{R(\gamma_\bv)\bigl[\gamma_\bv+R(\gamma_\bv)\bigr]+1\Bigr\}.
\label{eq:grad_progress}
\end{align}

\textbf{Projected gradient ascent.}
At each iteration update
$\bv' \leftarrow \bv + \alpha\nabla_\bv J(\bv)$
and project back to the unit sphere:
$\bv \leftarrow \bv'/\|\bv'\|$,
where $\alpha>0$ is the step size.
We use $10$ random restarts with warm-starting from the previous iteration's
best direction, and return the unit vector achieving the highest
$\mathcal{P}$ value.
The step size is set by Armijo backtracking line search with initial value
$\alpha_0=0.1$ and reduction factor $0.5$.

\section{Optimization and Computational Cost}
\label{app:optimization}

The sphere-constrained maximization of the progress-aware score
Eq.~\eqref{eq:max_progress_supp} has no general closed form.
We optimize $\mathcal{P}(\bv)$ with projected gradient ascent, $10$ random
restarts, and warm-starting from the previous iterate
(Section~\ref{app:grad_progress}).
Gradient refinement is deterministic once a query location is chosen:
$\alpha_{\mathrm{ref}}(\bz)=\|\nabla_\bx k_\calD(\bx,\bz)\|^2/(k_\calD(\bz,\bz)+\sigma^2)$
requires no Monte Carlo integration.

\paragraph{Maximizing $\alpha_{\mathrm{ref}}$ in the reported experiments.}
In all reported runs, we maximize $\alpha_{\mathrm{ref}}(\bz)$ by evaluating the closed-form expression on $N_{\mathrm{cand}}=100$ candidates sampled uniformly from the feasible set $\calX$, and selecting the highest-scoring candidate.
Each $\alpha_{\mathrm{ref}}$ evaluation is inexpensive relative to the
subsequent GP hyperparameter refit and black-box query, so we use this
lightweight random scoring rather than continuous multi-start optimization
over $\bz$.
Continuous or local multi-start maximization of $\alpha_{\mathrm{ref}}$, for
example, L-BFGS initialized from the top random candidates in a neighborhood
of $\bx$, remains a drop-in upgrade and was not used for the main-paper
numbers.

Each outer iteration requires
$(\tau_{\mathrm{explore}}+\tau_{\mathrm{exploit}})$ function evaluations in
the absence of early stopping. A naive exact-GP refit after an observation
costs $\mathcal{O}(|\calD|^3)$, although rank-one updates can reduce this
cost. In practice we retrain length scales and output scale by maximizing the
marginal likelihood after each observation.

\end{document}